\DeclareMathOperator{\tr}{tr}
\DeclareMathOperator{\di}{diag}
\DeclareMathOperator{\ve}{vec}
\DeclareMathOperator{\prox}{prox}
\DeclareMathOperator{\diag}{diag}
\DeclareMathOperator{\sign}{sgn}
\newcommand{\Y}{\bm{\mathcal{Y}}}
\newcommand{\G}{{\mathcal{G}}}
\newcommand{\K}{{k_{nn}}}
\newcommand{\V}{\mathbb{V}}
\newcommand{\MLT}{\mathbb{MLT}}
\newcommand{\D}{D}
\newcommand{\X}{\bm{\mathcal{X}}}
\newcommand{\eS}{\bm{\mathcal{S}}}
\newcommand{\eR}{\bm{\mathcal{R}}}
\newcommand{\eM}{\bm{\mathcal{M}}}
\newcommand{\E}{\mathbb{E}}
\newcommand{\Rbb}{\mathbb{R}}
\newcommand{\kr}{\otimes}
\newcommand{\ones}{\ensuremath{\mathbf{1}}}
\newtheorem{thm}{Theorem}
\newtheorem{defn}{Definition}
\newtheorem{lemma}{Lemma}
\ifcvprfinal\pagestyle{empty}\fi
\begin{document}

\title{Multilinear Low-Rank Tensors on Graphs \& Applications}

\author{Nauman Shahid$^{*}$, Francesco Grassi$^\dagger$, Pierre Vandergheynst$^{*}$\\
$^{*}$Signal Processing Laboratory (LTS2), EPFL, Switzerland, \tt\small firstname.lastname@epfl.ch \\
$^\dagger$ Politecnico di Torino, Italy, \tt\small francesco.grassi@polito.it
}

\makeatletter
\let\@oldmaketitle\@maketitle
\renewcommand{\@maketitle}{\@oldmaketitle
\vspace{-0.4cm}
  \centering \includegraphics[width=0.9\linewidth]{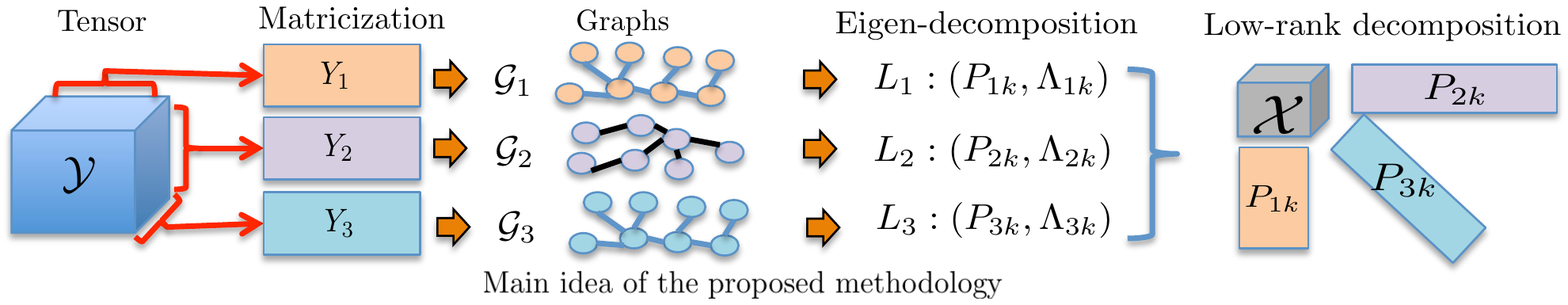}\bigskip}
\makeatother

\maketitle


\begin{abstract}\vspace{-0.4cm}
We propose a new framework for the analysis of low-rank tensors which lies at the intersection of spectral graph theory and signal processing.  As a first step, we present a new graph based low-rank decomposition which approximates the classical low-rank SVD for matrices and multilinear SVD for tensors. Then, building on this novel decomposition we construct a general class of convex optimization problems  for approximately solving  low-rank tensor inverse problems, such as tensor Robust PCA. The whole framework is named as ``Multilinear Low-rank tensors on Graphs (MLRTG)''. Our theoretical analysis shows: 1) MLRTG stands on the notion of approximate stationarity of multi-dimensional signals on graphs and 2) the approximation error depends on the eigen gaps of the graphs. We demonstrate applications  for a wide variety of 4 artificial and 12 real tensor datasets, such as EEG, FMRI, BCI, surveillance videos and hyperspectral images.  Generalization of the tensor concepts to non-euclidean domain, orders of magnitude speed-up, low-memory requirement and significantly enhanced performance at low SNR are the key aspects of our framework.
   
\end{abstract}

\vspace{-0.7cm}
\section{Introduction}
\vspace{-0.2cm}
Low-rank tensors span a wide variety of applications, like tensor compression \cite{kolda2009tensor, grasedyck2013literature}, robust PCA \cite{tomioka2010estimation, goldfarb2014robust}, completion \cite{liu2013tensor, da2013hierarchical, kressner2014low, huang2015provable} and parameter approximation in CNNs \cite{tai2015convolutional}.  However, the current literature lacks development in two main facets: 1) large scale processing and 2) generalization to non-euclidean domain (graphs \cite{shuman2013emerging}). For example, for a tensor $\Y \in \Rbb^{n \times n \times n}$, tensor robust PCA via nuclear norm  minimization \cite{tomioka2010extension} costs $\mathcal{O}(n^4)$ per iteration which is unacceptable even for $n$ as small as 100. Many application specific alternatives, such as randomization, sketching and fast optimization methods have been proposed to speed up computations \cite{tsourakakis2010mach, kang2012gigatensor, phan2013fast, huang2013fast, choi2014dfacto, huang2014distributed, bhojanapalli2015new, wang2015fast}, but they cannot be generalized for a broad range of applications.


In this work, we answer the following question: \textit{Is it possible to 1) generalize the notion of tensors to graphs and 2) target above applications in a scalable manner?} To the best of our knowledge, little effort has been made to target the former \cite{wang2011image}, \cite{wang2012neighborhood}, \cite{zhong2014large} at the price of higher cost, however, no work has been done to tackle the two problems simultaneously. Therefore, we revisit tensors from a new perspective and develop an entirely novel, scalable and approximate framework which benefits from \textit{graphs}.  

 It has recently been shown for the case of 1D  \cite{2016arXiv160102522P}  and time varying signals \cite{loukas2016stationary} that the first few eigenvectors of the graph provide a smooth basis for data, the notion of graph stationarity. We generalize this concept for higher order tensors and develop a framework that encodes the tensors as a multilinear combination of few graph eigenvectors constructed from the rows of its different modes (figure on the top of this page).  This multilinear combination, which we call \textit{graph core tensor (GCT)}, is highly structured like the core tensor obtained by Multilinear SVD (MLSVD) \cite{kolda2009tensor} and can be used to solve a plethora of tensor related inverse problems in a highly scalable manner.

\textbf{Contributions:} In this paper we propose Multilinear low-rank tensors on graphs (MLRTG) as a novel signal model for low-rank tensors. Using this signal model, we develop an entirely novel, scalable and approximate framework for a variety of inverse problems involving tensors, such as Multilinear SVD and robust tensor PCA. Most importantly, we theoretically link the concept to joint approximate graph stationarity and characterize the approximation error in terms of the eigen gaps of the graphs. Various experiments on a wide variety of 4 artificial and 12 real benchmark datasets such as videos, face images and hyperspectral images using our algorithms demonstrate the power of our approach. The MLRTG framework is highly scalable, for example, for a tensor $\Y \in \Rbb^{n \times n \times n}$, Robust Tensor PCA on graphs scales with $\mathcal{O}(n k^2  + k^{4})$, where $k \ll n$, as opposed to $\mathcal{O}(n^4)$.


\textbf{Notation:} We represent tensors with bold calligraphic letters $\Y$ and matrices with capital letters $Y$. For a tensor $\Y \in \Rbb^{n_1 \times n_2 \times n_3}$, with a multilinear rank $(r_1, r_2, r_3)$ \cite{cichocki2015tensor}, its $\mu^{th}$ matricization / flattening $Y_\mu$ is a re-arrangement such that $Y_1 \in \Rbb^{n_1 \times n_2 n_3}$. For simplicity we work with 3D tensors of same size $n$ and rank $r$ in each dimension.

\textbf{Graphs:} We specifically refer to a $\K$-nearest neighbors graph between the rows of $Y_\mu$ as $\G_\mu = (\V_\mu, \E_\mu, W_\mu)$ with vertex set $\V_\mu$, edge set $\E_\mu$ and weight matrix $W_\mu$. $W_\mu$, as defined in \cite{shuman2013emerging}, is constructed via a Gaussian kernel and the combinatorial Laplacian is given as $L_\mu = D_\mu - W_\mu$, where $D_\mu$ is the degree matrix. The eigenvalue of decomposition of $L_\mu = P_\mu \Lambda_\mu P_\mu^\top$ and we refer to the 1st $k$ eigenvectors and eigenvalues as $(P_{\mu k}, \Lambda_{\mu k})$. Throughout, we use FLANN \cite{muja2014scalable} for the  construction of $\G_\mu$ which costs $\mathcal{O}(n \log(n))$ and is parallelizable. We also assume that a fast and parallelizable framework, such as the one proposed in \cite{paratte2016fast} or \cite{si2014multi} is available for the computation of $P_{\mu k}$ which costs $\mathcal{O}(n \frac{k^2}{c})$, where $c$ is the number of processors.

\vspace{-0.2cm}
\section{Multilinear Low-Rank Tensors on Graphs}\label{sec:mlrtg}
\vspace{-0.2cm}
 A tensor $\Y^* \in \Rbb^{n \times n \times n}$ is said to be Multilinear Low-Rank on Graphs (MLRTG) if it can be encoded in terms of the lowest $k \ll n$ Laplacian eigenvectors as:
$$\ve(\Y^*) = (P_{1k} \otimes P_{2k} \otimes P_{3k})\ve(\X^*),$$
where $\ve(\cdot)$ denotes the vectorization, $\otimes$ denotes the kronecker product, $P_{\mu k} \in \Rbb^{n \times k},\forall \mu$ and $\X^* \in \Rbb^{k \times k \times k}$ is the \textit{Graph Core Tensor (GCT)}. We refer to a tensor from the set of all possible MLRTG as $\Y^* \in \MLT$. The main idea is illustrated in the figure on the first page of this paper. We call the tuple $(k,k,k)$, where $r \leq k \ll n$,  as the \textit{Graph Multilinear Rank} of $\Y^*$. In the sequel, for the simplicity of notation: 1) we work with the matricized version (along mode 1) of $\Y^*$ and 2) denote $P_{2,3 k} = P_{1k} \otimes P_{2k} \in \Rbb^{n^2 \times k^2}$. Then for $X^*_1 \in \Rbb^{k \times k^2}$, $Y^*_1 \in \Rbb^{n \times n^2} = P_{1k} X^*_1 P^\top_{2,3k}$.

\textbf{In simple words:} one can encode a low-rank tensor in terms of the low-frequency Laplacian eigenvectors. This multilinear combination is called GCT. It is highly structured like the core tensor obtained by standard Multilinear SVD (MLSVD) and can be used for a broad range of tensor based applications. Furthermore,  the fact that GCT encodes the interaction between the graph eigenvectors, renders its interpretation as a multi-dimensional graph fourier transform. 

In real applications, due to noise the tensor $\Y$ is only approximately low-rank (approximate MLRTG), so the following Lemma holds:
\begin{lemma}\label{lem:amlrtg}
For any $\Y = \Y^* + \bar{\Y} \in \Rbb^{n \times n \times n}$, where $\Y^* \in \MLT$ and $\bar{\Y}$ models the noise and errors, the $\mu^{th}$ matricization $Y_\mu$ of $\Y$ satisfies 
\begin{equation}\label{eq:mlrtg}
Y_1 = P_{1k} X^*_1 P^\top_{2,3k} + \bar{P}_{1k} \bar{X}_1 \bar{P}^\top_{2,3k},
\end{equation}
where $\bar{P}_{1 k} \in \Rbb^{n \times (n-k)}$ and $\bar{P}_{2,3 k} \in \Rbb^{n^2 \times (n-k)^2}$ denote the complement Laplacian eigenvectors (above $k$) and $\bar{X}_1 \in \Rbb^{(n-k)\times (n-k)^2}$. Furthermore, $\|\bar{X}_1\|_F \ll \|X_1\|_F$.
\end{lemma}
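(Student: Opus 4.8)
The plan is to exploit the completeness of the Laplacian eigenbases. Since $L_\mu = P_\mu \Lambda_\mu P_\mu^\top$ with $P_\mu$ orthogonal, the columns of $[P_{\mu k}\ \bar P_{\mu k}]$ form an orthonormal basis of $\Rbb^n$ for each $\mu$, so $P_1\otimes P_2\otimes P_3$ is an orthonormal basis of $\Rbb^{n^3}$ and every tensor admits a ``multidimensional graph Fourier'' expansion $\ve(\Y) = (P_1\otimes P_2\otimes P_3)\ve(\mathcal C)$, i.e. in mode-1 matricized form $Y_1 = P_1 C_1 (P_2\otimes P_3)^\top$ with $C_1 = P_1^\top Y_1 (P_2\otimes P_3)$. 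First I would cut this expansion along the ``low'' ($\le k$) versus ``high'' ($>k$) frequencies in each mode, using $P_1 = [P_{1k}\ \bar P_{1k}]$ and the tensor-product decomposition $\Rbb^{n^2} = \mathrm{span}(P_{2,3k}) \oplus \mathrm{span}(P_{2k}\otimes\bar P_{3k}) \oplus \mathrm{span}(\bar P_{2k}\otimes P_{3k}) \oplus \mathrm{span}(\bar P_{2,3k})$, where $P_{2,3k} = P_{2k}\otimes P_{3k}$ and $\bar P_{2,3k} = \bar P_{2k}\otimes\bar P_{3k}$.

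Next I would substitute $\Y = \Y^* + \bar\Y$. By the $\MLT$ hypothesis, $\ve(\Y^*) = (P_{1k}\otimes P_{2k}\otimes P_{3k})\ve(\X^*)$ contributes exactly the term $P_{1k} X_1^* P_{2,3k}^\top$. For the residual $\bar Y_1$, the block of $C_1$ that is low-frequency in all three modes is folded into the graph core tensor, so that the \emph{observed} low-frequency core becomes $X_1 = X_1^* + P_{1k}^\top \bar Y_1 P_{2,3k}$; the purely high-frequency block $\bar X_1 = \bar P_{1k}^\top \bar Y_1 \bar P_{2,3k}$ produces the second term $\bar P_{1k}\bar X_1 \bar P_{2,3k}^\top$ of \eqref{eq:mlrtg}; and the two cross blocks (through $P_{2k}\otimes\bar P_{3k}$ and $\bar P_{2k}\otimes P_{3k}$) are lumped into the residual term. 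I would be explicit that this lumping is the only approximate step: the orthogonal complement of the $k^2$-dimensional low subspace in $\Rbb^{n^2}$ has dimension $n^2 - k^2$, strictly larger than the $(n-k)^2$ columns of $\bar P_{2,3k}$, so \eqref{eq:mlrtg} is an exact identity only after these cross blocks are absorbed (equivalently, when $\bar\Y$ is modeled as essentially supported on the purely high-frequency subspace).

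For the magnitude claim I would use orthonormality and the contractivity of orthogonal projections. Since $P_{1k}$ and $P_{2,3k}$ have orthonormal columns, $\|X_1^*\|_F = \|P_{1k}X_1^*P_{2,3k}^\top\|_F = \|\Y^*\|_F$; likewise $\|\bar X_1\|_F = \|\bar P_{1k}^\top \bar Y_1 \bar P_{2,3k}\|_F \le \|\bar Y_1\|_F = \|\bar\Y\|_F$, and by the reverse triangle inequality $\|X_1\|_F \ge \|X_1^*\|_F - \|P_{1k}^\top \bar Y_1 P_{2,3k}\|_F \ge \|\Y^*\|_F - \|\bar\Y\|_F$. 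In the approximately-low-rank regime the Lemma posits, $\|\bar\Y\|_F \ll \|\Y^*\|_F$, so combining the two bounds yields $\|\bar X_1\|_F \ll \|X_1\|_F$ (and $\|X_1\|_F \approx \|X_1^*\|_F$).

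The hard part is not any of these estimates but the honesty of the two-term form \eqref{eq:mlrtg}: making ``$\ll$'' quantitative requires bounding the discarded cross-frequency blocks and the low-frequency noise projection, which needs the approximate joint stationarity / spectral concentration of $\Y^*$ on the three graphs together with a bound on the per-mode high-frequency energy of $\bar\Y$ — precisely the eigen-gap dependence advertised in the abstract. I would therefore present \eqref{eq:mlrtg} as exact on the modeling side and defer the quantitative ``$\ll$'' to the later stationarity analysis.
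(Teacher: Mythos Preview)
Your argument is essentially correct, and in fact you are more explicit than the paper about the one genuinely delicate point: that the range of $P_{2,3k}$ and the range of $\bar P_{2,3k}=\bar P_{2k}\otimes\bar P_{3k}$ together do \emph{not} span $\Rbb^{n^2}$, so the two-term form \eqref{eq:mlrtg} is a modeling identity once the mixed low/high blocks are absorbed. The paper handles this differently. It does not expand $Y_1$ in the full $P_1\otimes P_2\otimes P_3$ basis and then discard cross blocks; instead it introduces the Cartesian product graph $\G_2\times\G_3$, proves (its Lemma~3) that under the joint eigen-gap assumption $\max_\mu\lambda_{\mu k}\ll\min_\mu\lambda_{\mu,k+1}$ the Laplacian of the product graph admits a separable eigendecomposition $P_{k}\Lambda_kP_k^\top+\bar P_k\bar\Lambda_k\bar P_k^\top$, and then invokes that separation to write $Y_\mu$ directly in the two-term form. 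In other words, the paper packages the suppression of the cross terms into the eigen-gap hypothesis on the product graph rather than into a noise-support assumption.

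The second difference is how $\|\bar X_1\|_F\ll\|X_1\|_F$ is obtained. You derive it from the premise $\|\bar\Y\|_F\ll\|\Y^*\|_F$ via orthonormality and the triangle inequality, which is clean and self-contained. The paper does not argue this way: it computes the graph spectral covariance $\Gamma_\mu=P_\mu^\top Y_\mu Y_\mu^\top P_\mu$ explicitly from the two-term form, obtains $\|\Gamma_\mu\|_F^2=\|X_\mu\|_F^4+\|\bar X_\mu\|_F^4$ and $\|\Gamma_\mu(1{:}k,1{:}k)\|_F^2=\|X_\mu\|_F^4$, and concludes that Property~2 (low-frequency energy concentration, $\hat s(\Gamma_\mu)\approx1$) holds \emph{if and only if} $\|\bar X_\mu\|_F\ll\|X_\mu\|_F$. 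So in the paper Lemma~\ref{lem:amlrtg} is not proved in isolation: its magnitude claim is established as one direction of the equivalence in Theorem~\ref{thm:props}, tied to the spectral-covariance property rather than to a direct noise bound. Your route is more elementary and stands on its own; the paper's route buys the link to the stationarity/energy-concentration characterization that drives the rest of the framework.
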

\begin{proof} Please refer to the proof of Theorem \ref{thm:props} in Appendix \ref{sec:proof_props}. 
\end{proof}

\begin{figure}[htbp]
    \centering
        \centering
        \includegraphics[width=0.45\textwidth]{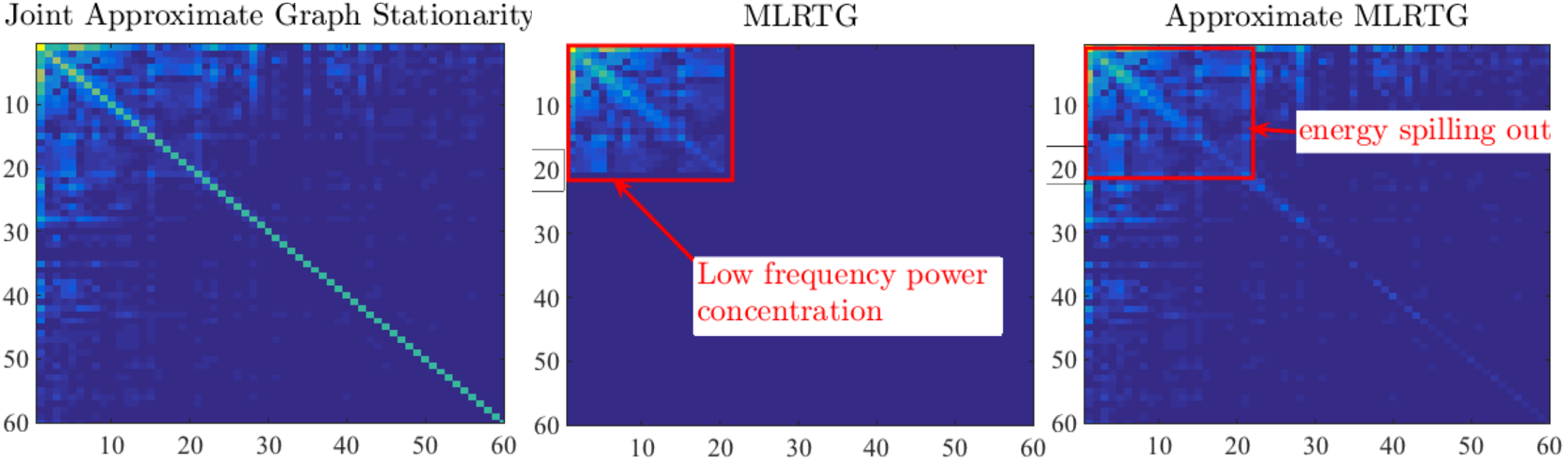}
         \caption{Illustration the properties of MLRTG in terms of an arbitrary graph spectral covariance (GSC) $\Gamma_\mu$ matrix.}
        \label{fig:stationarity}
    \end{figure}
    
    \begin{figure*}[htbp]
    \centering
        \centering
        \includegraphics[width=1.0\textwidth]{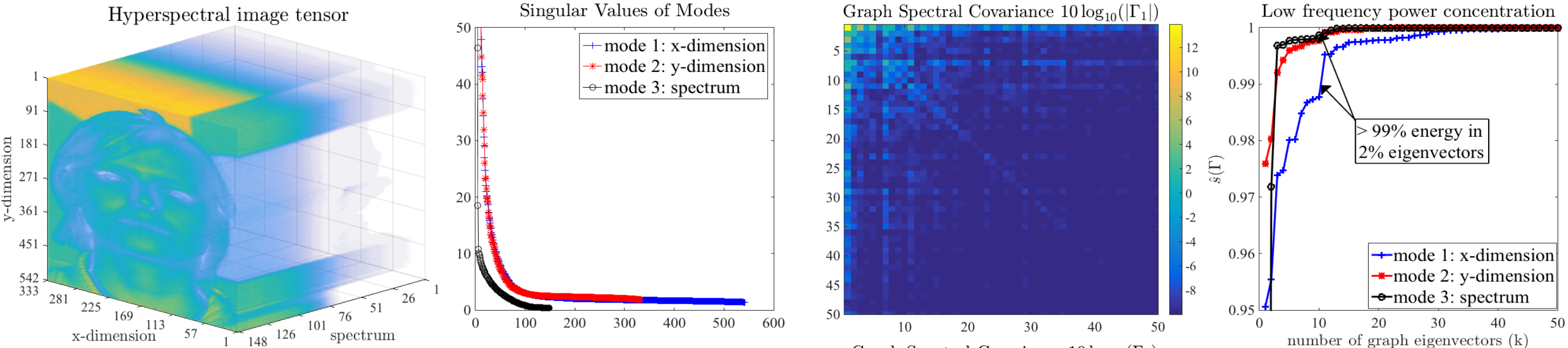}
         \caption{A hyperspectral image tensor from the Stanford database. The singular values of the modes, graph spectral covariance (GSC) and the energy concentration plot clearly show that the tensor is approximately MLRTG. }
        \label{fig:GSC}
        \vspace{-1.2em}
    \end{figure*}

Let $C_\mu \in \Rbb^{n \times n^2}$ be the covariance of $Y_\mu$, then the \textit{Graph Spectral Covariance (GSC)} $\Gamma_\mu$ is given as $\Gamma_\mu = P^\top_{\mu} C_\mu P_\mu$. For a signal that is approximately stationary on a graph, $\Gamma_\mu$ has most of its energy concentrated on the diagonal \cite{2016arXiv160102522P}. For MLRTG, we additionally require the energy to be concentrated on the top corner of $\Gamma_\mu$, which we call \textit{low-frequency energy concentration}.

\textbf{Key properties of MLRTG:} Thus, for any $\Y \in \MLT$, the GSC $\Gamma_\mu$ of each of its $\mu^{th}$ matricization $Y_\mu$  satisfies: 1) joint approximate graph stationarity, i.e, $\|\diag(\Gamma_\mu)\|^2_F/\|\Gamma_\mu\|^2_F \approx 1$ and 2) low frequency energy concentration, i.e, $\|\Gamma_\mu(1:k,1:k)\|^2_F/\|\Gamma_\mu\|^2_F \approx 1$, $\forall \mu$. 

\begin{thm}\label{thm:props} 
For any $\Y = \Y^* + \bar{\Y}$, $\Y^* \in \MLT$ if and only Lemma \ref{lem:amlrtg} and property 2 hold.
\end{thm}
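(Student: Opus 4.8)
The plan is to prove the two implications of the equivalence separately, in both cases passing to the mode-$1$ matricization and working in the full Laplacian eigenbases $P_\mu=[\,P_{\mu k}\mid\bar P_{\mu k}\,]$, $\mu=1,2,3$, together with the Kronecker basis $P_1\otimes P_2$ reordered so that its first $k^2$ columns are exactly $P_{2,3k}=P_{1k}\otimes P_{2k}$ and its next $(n-k)^2$ columns are $\bar P_{2,3k}=\bar P_{1k}\otimes\bar P_{2k}$. The structural fact that drives everything is that $P_{2,3k}^\top P_{2,3k}=I_{k^2}$ while $P_{2,3k}^\top\bar P_{2,3k}=(P_{1k}^\top\bar P_{1k})\otimes(P_{2k}^\top\bar P_{2k})=0$: the low-frequency and complementary Kronecker blocks are orthonormal and mutually orthogonal, so that writing $Y_1=P_1\,(P_1^\top Y_1 P_{2,3})\,P_{2,3}^\top$ and splitting the coefficient matrix $P_1^\top Y_1 P_{2,3}$ into a low--low block, a high--high block, and two mixed low--high blocks gives an orthogonal (Pythagorean) decomposition of $Y_1$ in the Frobenius norm.

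For the forward direction, suppose $\Y^*\in\MLT$, so $Y_1^*=P_{1k}X_1^*P_{2,3k}^\top$. I would expand $\bar Y_1$ in the same way, absorb its low--low block into $X_1^*$ (which does not increase the residual, by the orthogonality just noted), collect the high--high block as $\bar X_1$ and the two mixed blocks into the error term; this is exactly the decomposition \eqref{eq:mlrtg} of Lemma \ref{lem:amlrtg}. Since $\bar X_1$ and the mixed blocks are submatrices of $P_1^\top\bar Y_1 P_{2,3}$, their Frobenius norms are bounded by $\|\bar\Y\|_F$, whereas $\|X_1^*\|_F=\|\Y^*\|_F$ up to the absorbed piece, so $\|\bar X_1\|_F\ll\|X_1^*\|_F$ follows from $\|\bar\Y\|_F\ll\|\Y^*\|_F$. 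For property 2, take $C_1$ to be the mode-$1$ covariance, proportional to $Y_1Y_1^\top$; because $P_{2,3k}^\top\bar P_{2,3k}=0$ the cross term in $Y_1Y_1^\top$ vanishes and $\Gamma_1=P_1^\top C_1 P_1\propto\diag\!\big(X_1^*X_1^{*\top},\ \bar X_1\bar X_1^\top\big)$, hence $\|\Gamma_1(1{:}k,1{:}k)\|_F^2/\|\Gamma_1\|_F^2=\|X_1^*X_1^{*\top}\|_F^2/\big(\|X_1^*X_1^{*\top}\|_F^2+\|\bar X_1\bar X_1^\top\|_F^2\big)$, which is $\approx 1$ precisely because $\|\bar X_1\|_F\ll\|X_1^*\|_F$; the same computation for $\mu=2,3$ follows by symmetry, giving property 2 for all $\mu$.

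For the converse, assume \eqref{eq:mlrtg} and property 2. Reading the last identity backwards, property 2 forces $\|\bar X_1\bar X_1^\top\|_F\ll\|X_1^*X_1^{*\top}\|_F$, hence $\|\bar X_1\|_F\ll\|X_1^*\|_F$, so $Y_1=P_{1k}X_1^*P_{2,3k}^\top+(\text{negligible})$; defining $\Y^*$ by $\ve(\Y^*)=(P_{1k}\otimes P_{2k}\otimes P_{3k})\ve(\X^*)$, with the GCT $\X^*$ read off from $X_1^*$, puts $\Y^*\in\MLT$ by definition while $\Y-\Y^*=\bar\Y$ is small --- i.e.\ $\Y$ is approximately MLRTG in the required sense.

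The hard part will be the bookkeeping concealed in the two ``$\approx$'' symbols and in the mixed Kronecker blocks. The decomposition \eqref{eq:mlrtg} cannot hold exactly for arbitrary noise, because $\mathrm{span}(P_{2,3k})\oplus\mathrm{span}(\bar P_{2,3k})$ is a proper subspace of $\Rbb^{n^2}$ (the blocks $P_{1k}\otimes\bar P_{2k}$ and $\bar P_{1k}\otimes P_{2k}$ are missing), so one must show that these mixed low--high components of the noise are themselves negligible --- and quantifying this, together with quantifying how sharply the low-pass projectors $P_{\mu k}P_{\mu k}^\top$ separate the rank-$r$ signal subspace from the noise, is exactly where the eigen-gaps $\lambda_{k+1}(L_\mu)-\lambda_k(L_\mu)$ enter: a larger gap makes $P_{\mu k}P_{\mu k}^\top$ align more tightly with the true low-rank subspace, which is what converts every ``$\ll$'' and ``$\approx$'' above into an explicit bound. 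I would therefore state the quantitative version of the theorem in terms of these gaps and extract Lemma \ref{lem:amlrtg} --- i.e.\ the bound on $\|\bar X_1\|_F$ --- as a by-product of that analysis.
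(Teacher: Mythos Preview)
Your proposal is essentially correct and lands on the same core computation as the paper: using the orthogonality relations $P_{\mu k}^\top\bar P_{\mu k}=0$ and $P_{-\mu k}^\top\bar P_{-\mu k}=0$ to show that $\Gamma_\mu$ is block-diagonal, whence $\|\Gamma_\mu(1{:}k,1{:}k)\|_F^2/\|\Gamma_\mu\|_F^2=\|X_\mu\|_F^4/(\|X_\mu\|_F^4+\|\bar X_\mu\|_F^4)$, so property~2 is equivalent to the norm condition $\|\bar X_\mu\|_F\ll\|X_\mu\|_F$ of Lemma~\ref{lem:amlrtg}.

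The one organizational difference is that the paper does not argue directly from Kronecker orthogonality as you do; instead it first develops a ``separable eigenvector decomposition'' for the Laplacian of the \emph{Cartesian product} of the mode graphs (their Lemma on $L_1\otimes I+I\otimes L_2$), and uses the eigen-gap assumption on that product graph to justify writing $Y_\mu$ in the two-block form of Lemma~\ref{lem:amlrtg} before computing $\Gamma_\mu$. In effect, the paper absorbs your mixed blocks $P_{1k}\otimes\bar P_{2k}$ and $\bar P_{1k}\otimes P_{2k}$ into the ``complement'' part by assumption, via the product-graph eigen-gap, rather than tracking them explicitly. Your route is more elementary and you are right that those mixed blocks are where the approximation lives; the paper's route buys a cleaner two-term decomposition at the cost of a stronger structural hypothesis. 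Either way, the ratio identity and the biconditional follow by the same algebra you wrote down.
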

\begin{proof} Please refer to Appendix \ref{sec:proof_props}. 
\end{proof}

Fig. \ref{fig:stationarity} illustrates the properties in terms of an arbitrary GSC matrix. The leftmost plot corresponds to the case of approximate stationarity (strong diagonal), the middle to the case of non-noisy MLRTG and the rightmost plot to the case of approximate MLRTG. Note that the energy spilling out of the top left submatrix due to noise results in an approximate low-rank representation.

\textbf{Examples:} Many real world datasets satisfy the approximate MLRTG assumption. Fig. \ref{fig:GSC} presents the example of a hyperspectral face tensor. The singular values for each of the modes show the low-rankness property, whereas the graph spectral covariance and the energy concentration plot (property 2) show that 99\% of the energy of the mode 1 can be expressed in terms of the top 2\% of the graph eigenvectors. Examples of FMRI and COIL20 tensor are also presented in Fig. \ref{fig:GSC_two} of Appendix \ref{sec:eg_amlrtg}.

\vspace{-0.2cm}
\section{Applications of MLRTG}\label{sec:applications}
\vspace{-0.2cm}
Any $\Y^* \in \MLT$ is the product of two important components 1)  the Laplacian eigenvectors and eigenvalues $P_{\mu k}$, $\Lambda_{\mu k}$ for each of the modes of the tensor and 2) the GCT $\X$. While the former can be pre-computed, the GCT needs to be determined via an appropriate procedure. Once determined, it can be used  directly as a low-dimensional feature of the tensor $\Y$ or employed for other useful tasks. It is therefore possible to propose a general framework for solving a broad range of tensor / matrix inverse problems, which optimize the GCT $\X$. For a general linear operator $\eM$ and its matricization $M_1$,
\begin{equation}\label{eq:ginvt}
\min_{\X} \phi(M_1(P_{1k}X_1 P^\top_{2,3 k})-Y_1) + \gamma \sum_\mu \|X_\mu\|_{*g(\Lambda_{\mu k})},
\vspace{-1.2em}
\end{equation}
where $\phi(\cdot)$ is an $l_p$ norm depending on the application under consideration and $g(\Lambda_{\mu k}) = \Lambda^{\alpha}_{\mu k}, \alpha \geq 1 $, denote the kernelized Laplacian eigenvalues as the weights for the nuclear norm minimization. Assuming the eigenvalues are sorted in ascending order, this corresponds to a higher penalization of higher singular values of $X_\mu$ which correspond to noise. Thus, the goal is to determine a graph core tensor $\X$ whose rank is minimized in all the modes. Such a nuclear norm minimization on the full tensor (without weights) has appeared in earlier works \cite{goldfarb2014robust}. However, note that in our case we lift the computational burden by minimizing only the core tensor $\X$. 

\vspace{-0.2cm}
\subsection{ Graph Core Tensor Pursuit (GCTP)}\label{sec:gctp}
\vspace{-0.2cm}
The first application corresponds to the case where one is only interested in the GCT $\X$. For a clean matricized tensor $Y_1$, it is straight-forward to determine the matricized $\X$ as $X_1 = P^\top_{1k}Y_1 P_{2,3 k}$. 

For the case of noisy $\Y$ corrupted with Gaussian noise, one seeks a robust $\X$ which is not possible without an appropriate regularization on $\X$. Hence, we propose to solve problem \ref{eq:ginvt} with Frobenius norm:
\begin{equation}\label{eq:gctp1}
\min_{\X} \|Y_1 - P_{1k }X_1 P^{\top}_{2,3 k}\|^2_F + \gamma \sum_\mu \|X_\mu\|_{*g(\Lambda_{\mu k})},
\vspace{-1.2em}
\end{equation}
 Using $Y_1 = P_{1k }\hat{X}_1 P^{\top}_{2,3 k}$ in eq. \eqref{eq:gctp1}, we get:
\begin{equation}\label{eq:gctp}
\min_{\X} \|\hat{X}_1 - X_1\|^2_F + \gamma \sum_\mu \|X_\mu\|_{*g(\Lambda_{\mu k})}.
\vspace{-1.2em}
\end{equation}
which we call as \textit{Graph Core Tensor Pursuit (GCTP)}. To solve GCTP, one just needs to apply the singular value soft-thresholding operation (Appendix \ref{sec:algo_trpca}) on each of the modes of the tensor $\X$. For $\X \in \Rbb^{k\times k \times k}$, it scales with $\mathcal{O}(k^4)$, where $k \ll n$. 

\subsection{Graph Multilinear SVD (GMLSVD)}\label{sec:gmlsvd}
\vspace{-0.2cm}
Notice that the decomposition defined by eq. \eqref{eq:mlrtg} is quite similar to the standard Mulitlinear SVD (MLSVD) \cite{kolda2009tensor}. Is it possible to define a graph based MLSVD using $\X$ ?

\textbf{MLSVD:} In standard MLSVD, one aims to decompose a tensor $\Y \in \Rbb^{n \times n \times n}$ into factors $U_\mu \in \Rbb^{n \times r}$ which are linked by a core $\eS \in \Rbb^{r \times r \times r}$. This can be attained by solving the ALS problem \cite{kolda2009tensor} which iteratively computes the SVD of every mode of $\Y$ until the fit $\|\ve(\Y)-(U_1 \otimes U_2 \otimes U_3)\ve(\eS)\|^2_2$ stops to improve. This costs $\mathcal{O}(nr^2)$ per iteration for rank $r$.

\textbf{From MLSVD to GMLSVD:} In our case the fit is given in terms of the pre-computed Laplacian eigenvectors $P_{\mu k}$, i.e, $\|\ve(\Y)-(P_{1k} \otimes P_{2k} \otimes P_{3k})\ve(\X)\|^2_2$ and $\X$ is determined by GCTP eq. \eqref{eq:gctp}. This raises the question about how the factors $U_\mu$ relate to $P_{\mu k}$ and core tensor $\eS$ to $\X$. We argue as following: let $\X = (A_{1k} \kr A_{2k} \kr A_{3k})\ve(\eR)$ be the MLSVD of $\X$. Then, if we set set $V_\mu = P_{\mu k}A_{\mu k}$, then $U_\mu \approx V_\mu$. While we give an example below, a more thoretical study is presented in the Theorem \ref{thm:recovery}.

\textbf{Algorithm for GMLSVD:} Thus, for a tensor $\Y$, one can compute GMLSVD in the following steps: 1) Compute the graph core tensor $\X$ via GCTP (eq. \eqref{eq:gctp}), 2) Perform the MLSVD of $\ve(\X) = (A_{1k} \otimes A_{2k} \otimes A_{3k})\ve(\eR)$, 3) Let the factors $V_\mu = P_{\mu k}A_{\mu k}$ and the core tensor is $\eR$. Given the Laplacian eigenvectors $P_{\mu k}$, GMLSVD scales with $\mathcal{O}(k^4)$ per iteration which is the same as the complexity of solving GCTP.

\textbf{Example:} To understand this, imagine the case of 2D tensor (a matrix) of vectorized wedge images from COIL20 dataset in the columns. $U_1$ for this matrix corresponds to the left singular vectors obtained by SVD and $P_{1k}$ correspond to the first $k$ eigenvectors of the Laplacian $L_1$ between the rows (pixels). Fig. \ref{fig:example_gmlsvd} shows an example wedge image, 1st singular vector in $U_1$ obtained via SVD and the 2nd Laplacian eigenvector $P_{12}$. Clearly, the 1st singular vector in $U_1$ is not equal to the 2nd eigenvector $P_{12}$ (and others). However, if we recover $X$ using GCTP (eq. \eqref{eq:gctp}) and then perform the SVD of $X = A_{1k} R A^\top_{2k}$ and let $V_1 = P_{1k} A_{1k}$, then $U_1 \approx V_1$ (bottom right plot in Fig. \ref{fig:example_gmlsvd}). For more examples, please see Fig. \ref{fig:eg_algo_gmlsvd} in the Appendices.

 \begin{figure}[htbp]
    \centering
        \centering
        \includegraphics[width=0.4\textwidth]{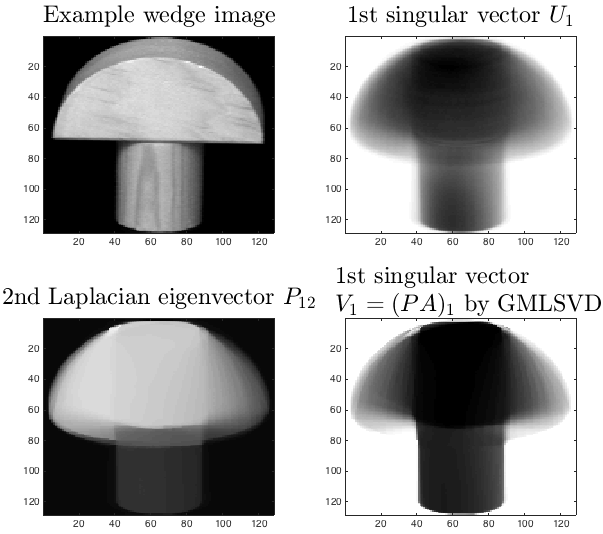}
         \caption{An example wedge image from COIL20 dataset, 1st singular vector in $U_1$ obtained via SVD and the 2nd Laplacian eigenvector $P_{12}$. Clearly, $U_1 \neq P{12}$ (and other eigenvectors). However, $U_1 \approx V_1$, where $V_1 = P_{1k}A_{1k}$, and $A_{1k}$ are the left singular vectors of $X = A_{1k} R A_{2k}^\top$ via GCTP eq. \eqref{eq:gctp}}
        \label{fig:example_gmlsvd}
        
    \end{figure}


\vspace{-0.4cm}
\subsection{Tensor Robust PCA on Graphs (TRPCAG)}\label{sec:trpcag}
\vspace{-0.2cm}
Another important application of low-rank tensors is Tensor Robust PCA (TRPCA) \cite{tomioka2010estimation}. Unfortunately, this method scales as $\mathcal{O}(n^4)$. We propose an alternate framework, tensor robust PCA on graphs (TRPCAG):
\begin{equation}\label{eq:trpca}
\min_{\X} \|P_{1k}X_1 P^\top_{2,3 k}-Y_1\|_1 + \gamma \sum_\mu \|X_\mu\|_{*g(\Lambda_{\mu k})}.
\vspace{-1.2em}
\end{equation}
The above algorithm requires nuclear norm on $\X \in \Rbb^{k \times k \times k}$ and scales with $\mathcal{O}(nk^2 + k^4)$. This is a significant complexity reduction over TRPCA. We use Parallel Proximal Splitting Algorithm to solve eq. \eqref{eq:trpca} as shown in Appendix \ref{sec:algo_trpca}.

\vspace{-0.2cm}
\section{Theoretical Analysis}\label{sec:theory}
\vspace{-0.2cm}
Although the inverse problems of the form (eq. \ref{eq:ginvt}) are orders of magnitude faster than the standard tensor based inverse problems, they introduce some approximation. First, note that we do not present any procedure to determine the optimal $k$. Furthermore, as noted from the proof of Theorem \ref{thm:props}, for $L_\mu = P_\mu \Lambda_\mu P_\mu^\top$, the choice of $k$ depends on the eigen gap assumption ($\lambda_{\mu k} \ll \lambda_{\mu k+1}$), which might not exist for the $\K$-Laplacians. Finally, noise in the data adds to the approximation as well. 

We perform our analysis for 2D tensors, i.e, matrices of the form $Y \in \Rbb^{n_1 \times n_2}$. The results can be extended for high order tensors in a straight-forward manner. We assume further that 1) the eigen gaps exist, i.e, there exists a $k^*$, such that $\lambda_{\mu k^*} \ll \lambda_{\mu k^*+1}$ and 2) we select a $k > k^*$ for our method.  For the case of 2D tensor, the general inverse problem \ref{eq:ginvt}, using $\|X\|_{*g(\Lambda_{k})} = \|g(\Lambda_{1k})Xg(\Lambda_{2k})\|_*$ can be written as:
\begin{equation}\label{eq:ginv}
\min_{X} \phi(M(P_{1k}X P^\top_{2k})-Y) + \gamma \|g(\Lambda_{1k})Xg(\Lambda_{2k})\|_*
\end{equation}

\vspace{-0.3cm}
\begin{thm}\label{thm:recovery} 
For any $Y^* \in \MLT$,
\begin{enumerate}
\item Let $Y^* = U_1 S U^\top_2$ be the SVD of $Y$ and $X^* = A_{1k^*} R A^{\top}_{2k^*}$ be the SVD of the GCT $X$ obtained via GCTP (eq.\eqref{eq:gctp}). Now, let $V_\mu = P_{\mu k^*} A_{\mu k^*}, \forall \mu$, where $P_{\mu k^*}$ are the Laplacian eigenvectors of $Y_\mu$, then, $V_\mu = U_\mu$ upto a sign permutation and $S = R$.
\item Solving eq. \eqref{eq:ginv}, with a $k > k^*$ is equivalent to solving the following factorized graph regularized problem:
\begin{align}\label{eq:greg}
\min_{V_1, V_2} & \phi(M(V_1 V^\top_2)-Y) + \gamma_1 \tr(V^\top_1 g(\tilde{L}_1) V_1) \nonumber \\ & + \gamma_2 \tr(V^\top_2 g(\tilde{L}_2) V_2),
\end{align}
where $V_\mu = P_{\mu k}A_{\mu k}$, $X = A_{1k}A^\top_{2k}$ and $g(\tilde{L}_\mu) = P_{\mu k} g(\Lambda_{\mu k}) P^\top_{\mu k}$.
\item Any solution ${F}^* = {V}^*_1 {V}^{*\top}_2\in \Re^{n_1 \times n_2}$ of \eqref{eq:greg}, where $V^*_\mu \in \Rbb^{n \times k}$ and $k > k^*$ with $\gamma_\mu = \gamma/\lambda_{\mu k^*+1}$ and $Y =Z^*_{1}Z^{*\top}_{2} + E$, where $E \in \Rbb^{n_1 \times n_2}$ and $\gamma > 0$ satisfies
\begin{align}\label{eq:theory}
& \phi({F}^* - Y) + \gamma \| \bar{P}^\top_{1k^*}{V}^{*}_{1} \|_F^2 + \gamma \| \bar{P}^\top_{2k^*}{V}^*_{2}\|_F^2 
 \leq \phi(E) \nonumber \\ & + \gamma \Big(\|Z^*_{1}\|_F^2  \frac{g(\lambda_{1 k^*})}{g(\lambda_{1 k^*+1})} + \|Z^*_{2}\|_F^2  \frac{g(\lambda_{2 k^*})}{g(\lambda_{2 k^*+1})} \Big).
\end{align}
where $\lambda_{\mu k^*}, \lambda_{\mu k^*+1} $ denote the $k^{*th}, {k^* + 1}^{st} $ eigenvalues of ${L_\mu}$ and $\bar{P}^\top_{\mu k^*}{V}^{*}_{\mu}$, where $\bar{P}_{\mu k^*} \in \Rbb^{n \times (k - k^*)}$  denote the projection of the factors on the $(k - k^*)$ complement eigenvectors.
\end{enumerate} 
\end{thm}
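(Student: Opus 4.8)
The plan is to prove the three parts in order, since part 3 re-uses the structure set up in parts 1--2.

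For part 1 I would combine the MLRTG model with the eigengap hypothesis. Since $Y^*\in\MLT$ and $\lambda_{\mu k^*}\ll\lambda_{\mu k^*+1}$, the noiseless tensor is supported on the first $k^*$ Laplacian eigenvectors, so GCTP \eqref{eq:gctp} returns exactly $X^*=P_{1k^*}^\top Y^* P_{2k^*}$ and $Y^*=P_{1k^*}X^*P_{2k^*}^\top$. Substituting the SVD $X^*=A_{1k^*}RA_{2k^*}^\top$ gives $Y^*=(P_{1k^*}A_{1k^*})\,R\,(P_{2k^*}A_{2k^*})^\top=V_1RV_2^\top$. Because $P_{\mu k^*}$ has orthonormal columns and $A_{\mu k^*}$ is orthogonal, $V_\mu^\top V_\mu=A_{\mu k^*}^\top P_{\mu k^*}^\top P_{\mu k^*}A_{\mu k^*}=I$, so $V_1RV_2^\top$ is itself a thin SVD of $Y^*$ with non-negative diagonal $R$; uniqueness of the SVD (up to the usual sign/rotation ambiguity on equal-singular-value blocks) then forces $V_\mu=U_\mu$ and $R=S$.

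For part 2 I would use the variational (balanced-factor) characterization $\|Z\|_*=\min_{AB^\top=Z}\tfrac12(\|A\|_F^2+\|B\|_F^2)$, applied after distributing the weight $g(\Lambda_{\mu k})$ symmetrically onto the two factors (equivalently, writing the penalty in \eqref{eq:ginv} as $\|g(\Lambda_{1k})^{1/2}Xg(\Lambda_{2k})^{1/2}\|_*$ and absorbing the half-power into the definition of $g(\tilde L_\mu)$). Writing $X=A_{1k}A_{2k}^\top$ and substituting $V_\mu=P_{\mu k}A_{\mu k}$: the data term is unchanged since $M(P_{1k}XP_{2k}^\top)=M(V_1V_2^\top)$, while each penalty becomes $\tfrac12\tr(A_{\mu k}^\top g(\Lambda_{\mu k})A_{\mu k})=\tfrac12\tr(V_\mu^\top P_{\mu k}g(\Lambda_{\mu k})P_{\mu k}^\top V_\mu)=\tfrac12\tr(V_\mu^\top g(\tilde L_\mu)V_\mu)$, using $P_{\mu k}^\top P_{\mu k}=I$. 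Joint minimization over $(V_1,V_2)$ reproduces the optimal value of \eqref{eq:ginv}, and folding $\gamma$ into mode-dependent $\gamma_\mu$ is cosmetic. (One restricts to connected $\K$-graphs, or drops the null eigenvector, so that $g(\Lambda_{\mu k})$ is invertible in this change of variables.)

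Part 3 is the real estimate, and I would run the standard ``optimality against a well-chosen competitor'' argument. Since $F^*=V_1^*V_2^{*\top}$ minimizes \eqref{eq:greg}, its objective is at most that of any feasible pair $(W_1,W_2)$; I take $W_\mu=Z_\mu^*$ (padded to inner dimension $k$ if necessary), which by part 1 lies in the span of the first $k^*$ eigenvectors. On the left, since $g$ is non-decreasing on the ascending spectrum, $\tr(V_\mu^{*\top}g(\tilde L_\mu)V_\mu^*)=\sum_{i=1}^{k}g(\lambda_{\mu i})\|p_{\mu i}^\top V_\mu^*\|^2\ge g(\lambda_{\mu k^*+1})\sum_{i=k^*+1}^{k}\|p_{\mu i}^\top V_\mu^*\|^2=g(\lambda_{\mu k^*+1})\|\bar P_{\mu k^*}^\top V_\mu^*\|_F^2$, so with $\gamma_\mu=\gamma/g(\lambda_{\mu k^*+1})$ the left-hand objective dominates $\phi(F^*-Y)+\gamma\sum_\mu\|\bar P_{\mu k^*}^\top V_\mu^*\|_F^2$. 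On the right, the data term collapses to $\phi(E)$ because $W_1W_2^\top=Z_1^*Z_2^{*\top}=Y-E$, and each penalty obeys $\tr(Z_\mu^{*\top}g(\tilde L_\mu)Z_\mu^*)=\sum_{i=1}^{k^*}g(\lambda_{\mu i})\|p_{\mu i}^\top Z_\mu^*\|^2\le g(\lambda_{\mu k^*})\|Z_\mu^*\|_F^2$ (the terms $i>k^*$ vanish because $Z_\mu^*$ is orthogonal to those eigenvectors); multiplying by $\gamma_\mu$ produces the factor $g(\lambda_{\mu k^*})/g(\lambda_{\mu k^*+1})$. Chaining the two bounds yields \eqref{eq:theory}. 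The hard part is the competitor step, and it is exactly where the MLRTG hypothesis does its work: taking $W_\mu=Z_\mu^*$ is legitimate only because the clean part $Z_1^*Z_2^{*\top}$ is (exactly, or for approximate MLRTG nearly) supported on the top-$k^*$ graph eigenvectors; without an eigengap, $g(\lambda_{\mu k^*})/g(\lambda_{\mu k^*+1})$ need not be small and \eqref{eq:theory}, although still true, becomes uninformative. In the merely approximate case one instead uses $W_\mu=P_{\mu k^*}P_{\mu k^*}^\top Z_\mu^*$ and carries an extra residual term for the out-of-band energy of $Z^*$; a further minor bookkeeping point is the normalization of $g$ (single vs. squared power) between \eqref{eq:ginv} and \eqref{eq:greg}, which only reparametrizes the kernel.
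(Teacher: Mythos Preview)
Your proposal is correct and follows essentially the same three-step route as the paper: part~1 via the lossless compression $X^*=P_{1k^*}^\top Y^*P_{2k^*}$ and uniqueness of the SVD; part~2 via the variational identity $\|Z\|_*=\min_{AB^\top=Z}\tfrac12(\|A\|_F^2+\|B\|_F^2)$ applied to the weighted nuclear norm; and part~3 via the optimality inequality with competitor $(Z_1^*,Z_2^*)$, bounding the trace terms below by $g(\lambda_{\mu k^*+1})\|\bar P_{\mu k^*}^\top V_\mu^*\|_F^2$ and above by $g(\lambda_{\mu k^*})\|Z_\mu^*\|_F^2$. The only cosmetic difference is that the paper outsources the part~2 equivalence to an external reference (Theorem~1 of Rao et al.), whereas you unroll it directly; you are also more careful than the paper about the half-power normalization of $g$ and about taking $\gamma_\mu=\gamma/g(\lambda_{\mu k^*+1})$ rather than $\gamma/\lambda_{\mu k^*+1}$ (the latter appears to be a typo in the statement, since the proof needs the $g$).
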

\vspace{-0.2cm}
\begin{proof}
Please see Appendix \ref{sec:proof_thm_recovery}.
\end{proof}
\vspace{-0.2cm}

\textbf{In simple words:} Theorem \ref{thm:recovery} states that 1) the singular vectors and values of a matrix / tensor obtained by GMLSVD are equivalent to those obtained by MLSVD, 2) in general, the inverse problem \ref{eq:ginv} is equivalent to solving a graph regularized matrix / tensor factorization problem (eq.\ref{eq:greg}) where the factors $V_\mu$ belong to the span of the graph eigenvectors constructed from the modes of the tensor.  The  bound eq. \ref{eq:theory} shows that to recover an MLRTG one should have large eigen gaps ${\lambda_{\mu k^*}}/{\lambda_{ \mu k^*+1}}$. This occurs when the rows of the matricized $\Y$ can be clustered into $k^*$ clusters.  The smaller $\| \bar{P}^\top_{1k^*}{V}^{*}_{1} \|_F^2 + \| \bar{P}^\top_{2k^*}{V}^*_{2}\|_F^2$ is, the closer $F^*$ is to $\MLT$. In case one selects a $k > k^*$, the error is characterized by the projection of singular vectors $V^*_\mu$ on  $(k-k^*)$ complement graph eigenvectors $\bar{P}_{\mu k^*}$. Our experiments show that selecting a $k^* > k$ always leads to a better recovery when the exact value of $k^*$ is not known. 

\vspace{-0.5cm}
\section{Experimental Results}\label{sec:results}
\vspace{-0.3cm}

\begin{figure*}[htbp]
    \centering
        \centering
        \includegraphics[width=1.0\textwidth]{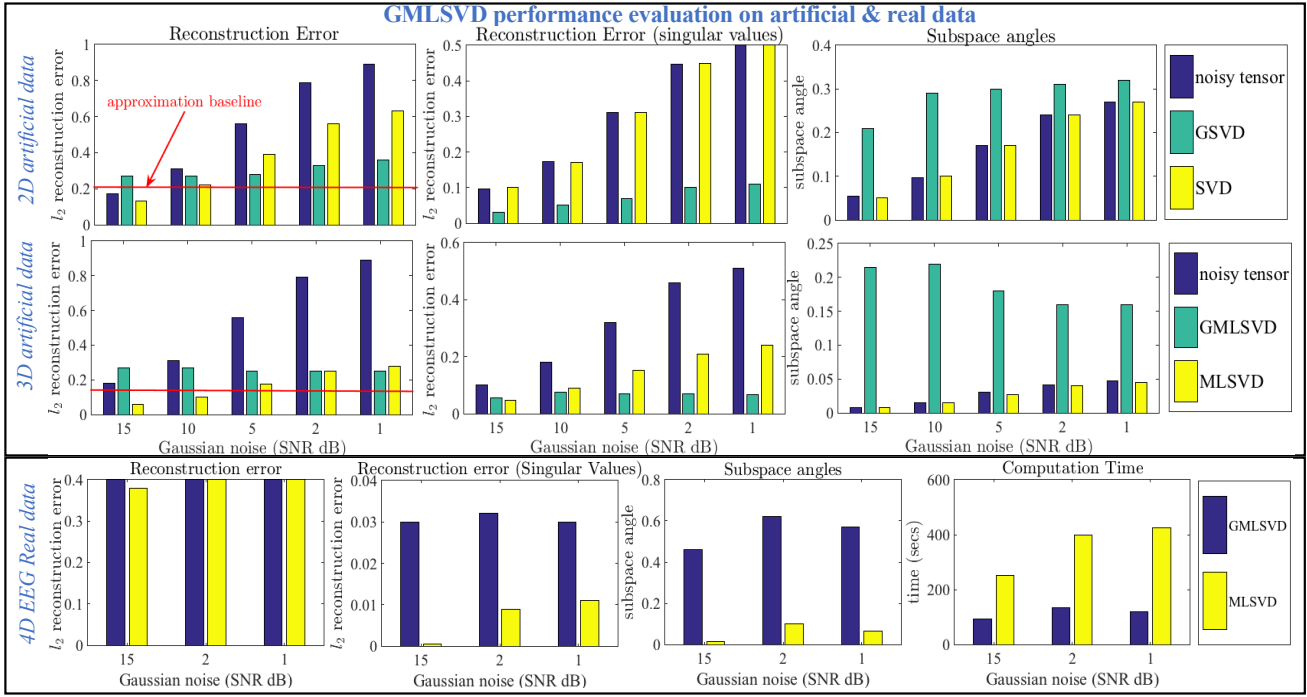}
         \caption{Performance comparison of GSVD with SVD and GMLSVD with MLSVD for 2D artificial (1st row), 3D artificial (2nd row) and 4G real EEG tensors (3rd row) under different SNR scenarios. The artificial tensors have a size 100 and rank 10 along each mode and the 4D EEG tensor has the size $513 \times 128 \times 30 \times 200$. A core tensor of size 30 along each mode and $100 \times 50 \times 30 \times 50$ is used for the artificial and real tensors respectively. The leftmost plots show the $\ell_2$ reconstruction errors, the middle plots show the reconstruction error for top 30 singular values and the right plots show the subspace angles for the 1st five subspace vectors. Clearly GSVD / GMLSVD outperform SVD / MLSVD in terms of computation time and error for big and noisy datasets.}
        \label{fig:gmlsvd_results}
        \vspace{-1.2em}
    \end{figure*}

\textbf{Datasets:} To study the performance of GMLSVD and TRPCAG, we perform extensive experimentation on 4 artificial and 12 real 2D-4D tensors. All types of low-rank artificial datasets are generated by filtering a randomly generated tensor with the $\K$ combinatorial Laplacians constructed from its flattened modes (details in Appendix \ref{sec:experiment_details}). Then, different levels of Gaussian and sparse noise are added to the tensor. The real datasets include Hyperspectral images, EEG, BCI, FMRI, 2D and 3D image and video tensors and 3 point cloud datasets.

\textbf{Methods:}  GMLSVD and TRPCAG are low-rank tensor factorization methods, which are programmed using GSPBox \cite{perraudin2014gspbox}, UNLocBox \cite{perraudin2014unlocbox} and Tensorlab \cite{tensorlab} toolboxes. Note that GMLSVD is robust to Gaussian and TRPCAG to sparse noise, therefore, these methods are tested under varying levels of these two types of  noise.  To avoid any confusion, we call the 2D tensor (matrix) version of GMLSVD as Graph SVD (GSVD). 

For the 3D tensors with Gaussian noise we compare GMLSVD performance with MLSVD. For the 3D tensor with sparse noise we compare TRPCAG  with Tensor Robust PCA (TRPCA) \cite{tomioka2010estimation}  and GMLSVD. For the 2D tensors (matrices) with Gaussian noise we compare GSVD with simple SVD.  Finally for the 2D matrix with sparse noise we compare TRPCAG with Robust PCA (RPCA) \cite{candes2011robust}, Robust PCA on Graphs (RPCAG) \cite{shahid2015robust},  Fast Robust PCA on Graphs (FRPCAG) \cite{shahid2015fast} and Compressive PCA (CPCA) \cite{shahid2016compressive}. Not all the methods are tested on all the datasets due to computational reasons.

\textbf{Parameters:} For all the experiments involving TRPCAG and GMLSVD, the $\K$ graphs are constructed from the rows of each of the flattened modes of the tensor, using $\K = 10$ and a Gaussian kernel for weighting the edges. For all the other methods the graphs are constructed as required, using the same parameters as above. Each method has several hyper-parameters which require tuning. For a fair comparison, all the methods are properly tuned for their hyper-parameters and best results are reported. For details on all the datasets, methods and parameter tuning please refer to Appendix \ref{sec:experiment_details}.

\textbf{Evaluation Metrics:} The metrics used for the evaluation can be divided into two types: 1) quantitative and 2) qualitative. Three different types of quantitative measures are used: 1) normalized $\ell_2$ reconstruction error of the tensor $\|\ve(\Y)-\ve(\Y^*)\|_2/\|\ve(\Y^*)\|_2$, 2) the normalized $\ell_2$ reconstruction error of the first $k^*$ (normally $k^* = 30$) singular values along mode 1 $\|\sigma^1_{1:k^*}-\sigma^{1*}_{1:k^*}\|_2/\|\sigma^{1*}_{1:k^*}\|_2$  3) the subspace angle (in radian) of mode 1 between the 1st five subspace vectors determined by the proposed method and those of the clean tensor: $\arccos{|V_1(:,1:5)^\top U_1(:,1:5)|}$ and 4) the alignment of the singular vectors $\diag(|V_1(:,1:5)^\top U_1(:,1:5)|)$, where $V_1$ and $U_1$ denote the mode 1 singular vectors determined by the proposed method and clean tensor. The qualitative measure involves the visual quality of the low-rank components of tensors.

\subsection{Experiments on GMLSVD}
\vspace{-0.2cm}
\textbf{Performance study on artificial datasets:} The first two rows of Fig. \ref{fig:gmlsvd_results} show the performance of GSVD (for 2D) and GMLSVD (for 3D) on artificial tensors of the size 100 and rank 10 in each mode, for varying levels of Gaussian noise ranging from 15dB to 1dB. The three plots show the $\ell_2$ reconstruction error of the recovered tensor, the first $k^* = 30$ singular values and and the subspace angle of the 1st mode subspace (top 5 vectors), w.r.t to those of the clean tensor. These results are compared with the standard SVD for 2D tensor and standard MLSVD for the 3D tensor. It is interesting to note from the leftmost plot that the $\ell_2$ reconstruction error for GSVD tends to get lower as compared to SVD at higher noise levels (SNR less than 5dB). The middle plot explains this phenomena where one can see that the $\ell_2$ error for singular values is significantly lower for GSVD than SVD at higher noise levels. This observation is logical, as for higher levels of noise the lower singular values are also affected. SVD is a simple singular value thresholding method which does not eliminate the effect of noise on lower singular values, whereas GSVD is a smart weighted nuclear norm method which thresholds the lower singular values via a function of the graph eigenvalues. This effect is shown in detail in Fig. \ref{fig:gsvd_example_small}. On the contrary, the subspace angle (for first 5 vectors) for GSVD is higher than SVD for all the levels of noise. This means that the subspaces of the GSVD are not well aligned with the ones of the clean tensor. However, as shown in the right plot in Fig. \ref{fig:gsvd_example_small}, strong first 7 elements of the diagonal $\diag|V_1(:,1:5)^\top U_1(:,1:5)|$ show that the individual subspace vectors of GSVD  are very well aligned with those of the clean tensor.  This is the primary reason why the $\ell_2$ reconstruction error of GSVD is less as compared to SVD at low SNR. At higher SNR, the approximation error of GSVD dominates the error due to noise. Thus, GSVD reveals its true power at low SNR scenarios. Similar observations can be made about GMLSVD from the 2nd row of Fig. \ref{fig:gmlsvd_results}.
 
 \begin{figure}[htbp]
    \centering
        \centering
        \includegraphics[width=0.45\textwidth]{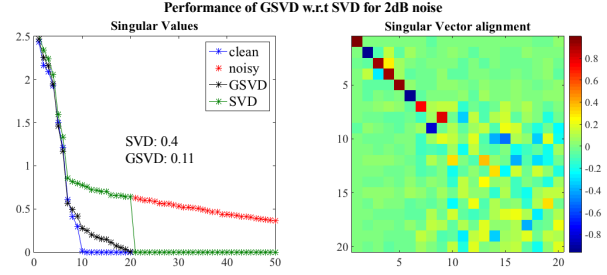}
         \caption{Singular values (left plot) of GSVD, SVD and clean data and singular vector alignment of GSVD and clean data (right plot) for a 2D matrix corrupted with Gaussian noise (SNR 5dB). Clearly GSVD eliminates the effect of noise from lower singular values and aligns the first few singular vectors appropriately. }
        \label{fig:gsvd_example_small}
        \vspace{-1.2em}
    \end{figure}

\textbf{Time, memory \& performance on real datasets:} The 3rd row of Fig. \ref{fig:gmlsvd_results} shows the results of GMLSVD compared to MLSVD for a 4D real EEG dataset of size 3GB and dimensions $513 \times 128 \times 30 \times 200$. A core tensor of size $100 \times 50 \times 30 \times 50$ is used for this experiment. It is interesting to note that for low SNR scenarios the $\ell_2$ reconstruction error of both methods is approximately same. GMLSVD and MLSVD both show a significantly lower error for the singular values (note that the scale is small), whereas GMLSVD's subspaces are less aligned as compared to those of MLSVD. The rightmost plot of this figure compares the computation time of both methods. \textit{Clearly, GMLSVD wins on the computation time (120 secs) significantly as compared to MLSVD (400 secs)}. For this 3GB dataset, GMLSVD requires 6GB of memory whereas MLSVD requires 15GB, as shown in the detailed results in Fig. \ref{fig:real_data_gaussian_main} of the Appendices. Fig. \ref{fig:real_data_gaussian_main} also presents results for BCI, hyperspectral and FMRI tensors and reveals how GMLSVD performs better as compared to MLSVD while requiring less computation time and memory for big datasets in the low SNR regime. For a visualization of the clean, noisy and GMLSVD recovered tensors, their singular values and the alignments of the subspace vectors for FMRI and hyperspectral tensors, please refer to Fig. \ref{fig:gmlsvd_main} in Appendices.

   \begin{figure}[htbp]
    \centering
        \centering
        \includegraphics[width=0.45\textwidth]{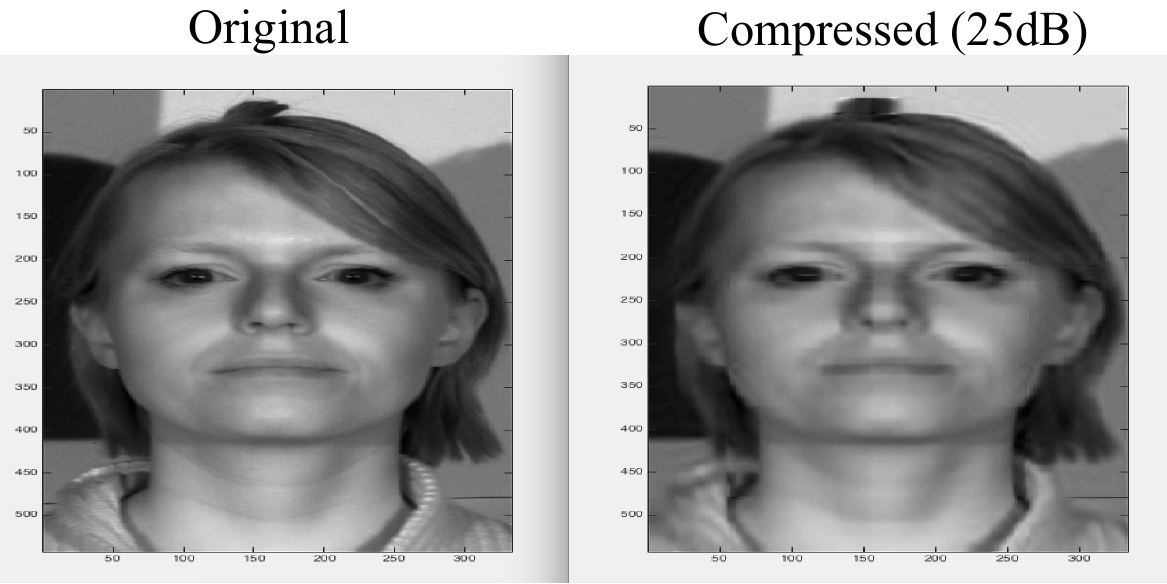}
         \caption{Qualitative and quantitative results for the face ($542 \times 333 \times 148$) 3D hyperspectral tensors. Using core of size $70 \times 70 \times 30$  we attained 150 times compression while maintaining an SNR of 25dB.}
        \label{fig:hyperspectral_compression}
        \vspace{-1.2em}
    \end{figure}
    
     \begin{figure*}[htbp]
    \centering
        \centering
        \includegraphics[width=1.0\textwidth]{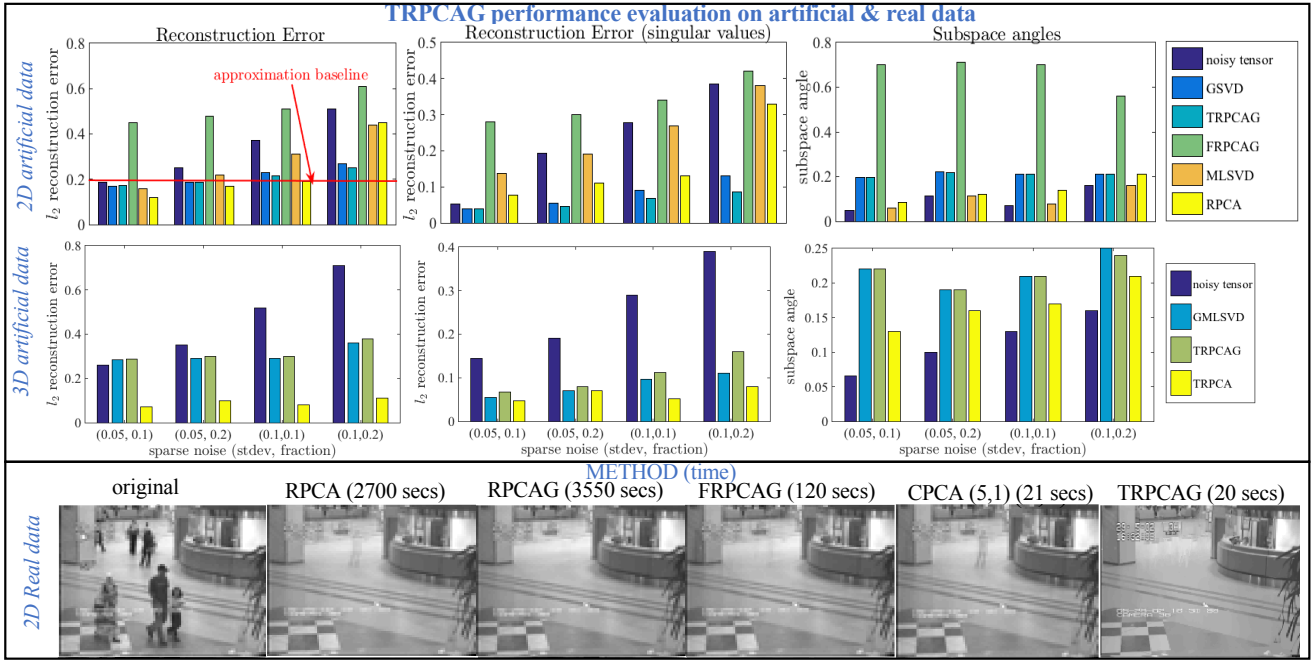}
         \caption{Performance analysis of TRPCAG for 2D artificial (1st row) and 3D artificial (2nd row) with varying levels of sparse noise and 2D airport lobby video tensor (3rd row). The artificial tensors have a size 100 and rank 10 along each mode. A core tensor of size 30 along each mode is used for the artificial tensors and $100 \times 50$ for the real tensor. The leftmost plots show the $\ell_2$ reconstruction errors, the middle plots show the $\ell_2$ reconstruction error for 30 singular values and the right plots show the subspace angles for the 1st five subspace vectors. For the 2D real video dataset, the rightmost frame shows the result obtained via TRPCAG. The computation time for different methods is written on the top of frames. Clearly TRPCAG performs quite well while significantly reducing the computation time.}
       \label{fig:trpcag_results}
       \vspace{-1.2em}
    \end{figure*}
    
\begin{figure}[htbp]
    \centering
        \centering
        \includegraphics[width=0.45\textwidth, height = 0.15\textwidth]{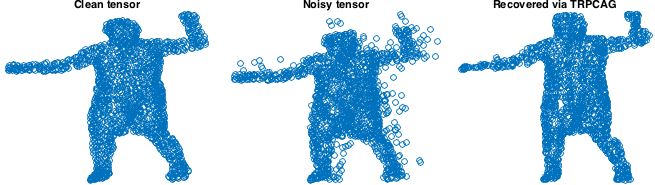}
         \caption{TRPCAG performance for recovering the low-rank point cloud of a dancing person from the sparse noise. Actual point cloud (left), noisy point cloud (middle), recovered via TRPCAG (right).}
        \label{fig:dancer}
        \vspace{-1.2em}
    \end{figure}
    
    \begin{figure}[htbp]
    \centering
        \centering
        \includegraphics[width=0.45\textwidth]{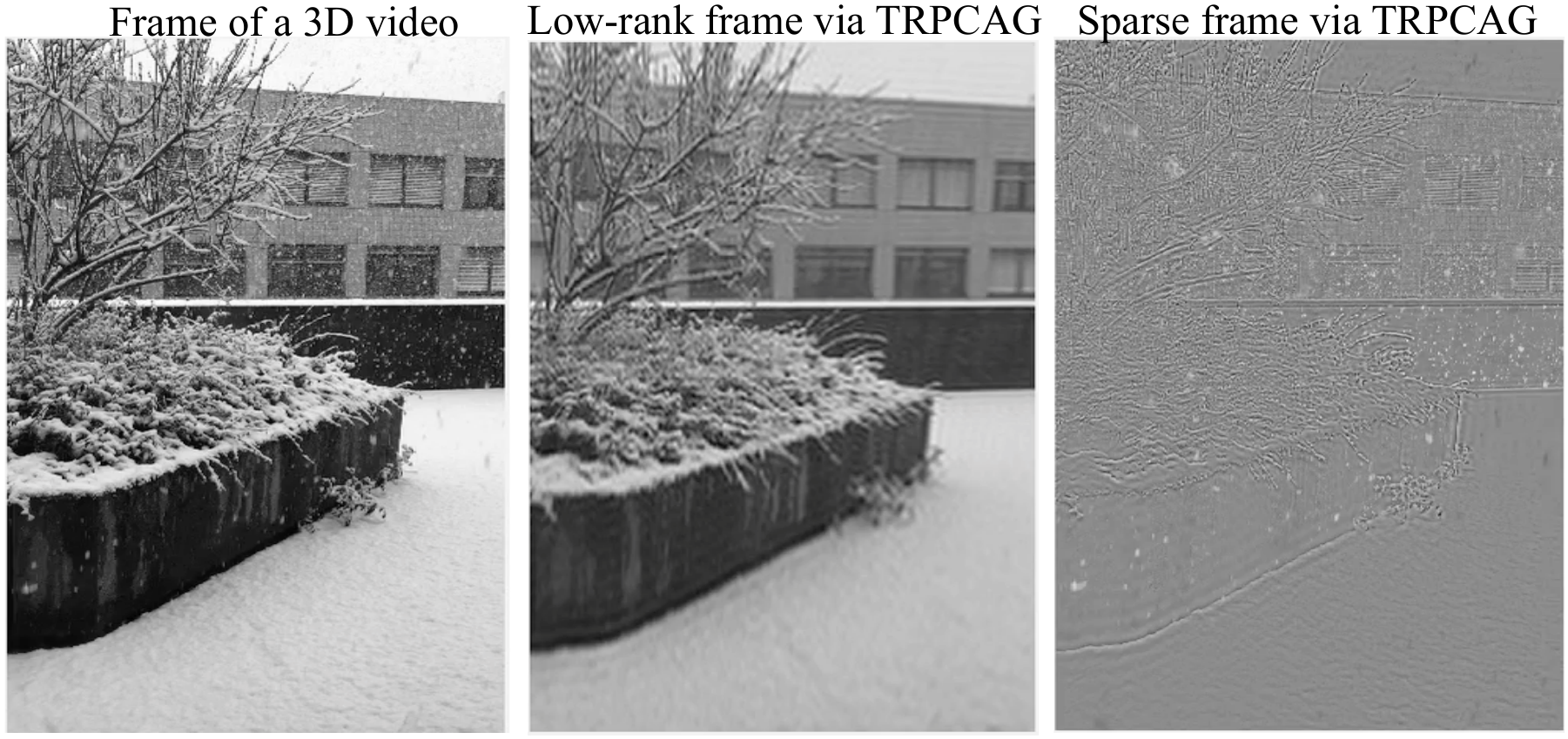}
         \caption{Low-rank recovery for a 3D video of dimension $1920 \times 1080 \times 500$ and size 1.5GB via TRPCAG. Using a core size of $100 \times 100 \times 50$, TRPCAG converged in less than 3 minutes. }
        \label{fig:snow}
        \vspace{-1.2em}
    \end{figure}
    
     \begin{figure*}[htbp]
    \centering
        \centering
        \includegraphics[width=1.0\textwidth]{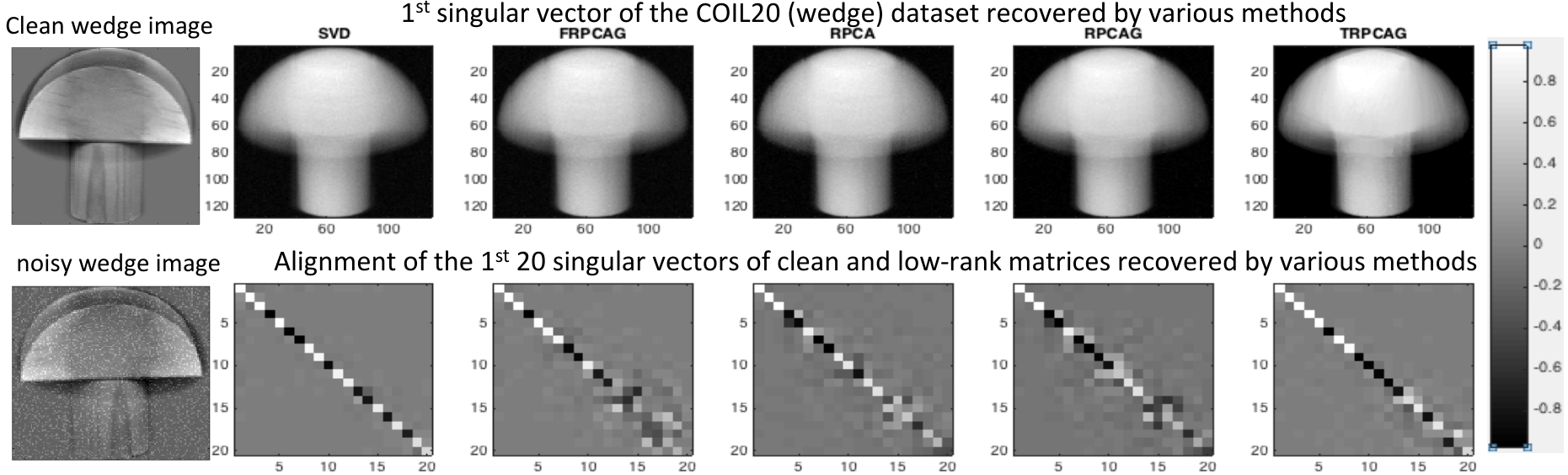}
         \caption{Robust recovery of subspace structures via TRPCAG. The leftmost plots show a clean and sparsely corrupted sample wedge image from COIL20 dataset. Other plots in the 1st row show the 1st singular vector recovered by various low-rank recovery methods, SVD, FRPCAG, RPCA, RPCAG and TRPCAG and the 2nd row shows the alignment of the 1st 20 singular vecors recovered by these methods with those of the clean tensor. }
        \label{fig:sv_wedge}
        \vspace{-1.2em}
    \end{figure*}

\textbf{Compression:} An obvious goal of MLSVD is the compression of low-rank tensors, therefore, GMLSVD can also be used for this purpose. Fig. \ref{fig:hyperspectral_compression} shows results for the face ($542 \times 333 \times 148$) 3D hyperspectral tensor. Using core of size $70 \times 70 \times 30$  we attained 150 times compression while maintaining SNR of 25dB. Fig. \ref{fig:hyperspectral_compression_zoomed} in the Appendices shows such results for three other datasets. The rightmost plots of Fig. \ref{fig:gmlsvd_main} in Appendices also show compression results for FMRI, EEG and BCI datasets.

\vspace{-0.2cm}
\subsection{Experiments on TRPCAG}
\vspace{-0.2cm}
\textbf{Performance study on artificial datasets:} The first two rows of Fig. \ref{fig:trpcag_results} show experiments on the 2D and 3D artificial datasets with varying levels of sparse noise. The 2D version of TRPCAG is compared with state-of-the-art Robust PCA based methods, FRPCAG and RPCA and also with SVD based methods like MLSVD and GSVD. Conclusions, similar to those for the Gaussian noise experiments can be drawn for the 2D matrices (1st row). Thus TRPCAG is better than state-of-the-art methods in the presence of large fraction of sparse noise. For the 3D tensor (2nd row), TRPCAG is compared with  GMLSVD and TRPCA \cite{tomioka2010estimation}. Interestingly, the performance of TRPCA is always better for this case, even in the presence of high levels of noise. While TRPCA produces the best results, its computation time is orders of magnitude more than TRPCAG (discussed next). A detailed analysis of the singular values recovered by TRPCAG can be done via Fig. \ref{fig:TRPCAG_example} in the Appendices.

\textbf{Time \& performance on 2D real datasets:} The 3rd row of Fig. \ref{fig:trpcag_results} present experiments on the 2D real video dataset obtained from an airport lobby (every frame vectorized and stacked as the columns of a matrix). The goal is to separate the static low-rank component from the sparse part (moving people) in the video. The results of TRPCAG are compared with RPCA, RPCAG, FRPCAG and CPCA with a downsampling factor of 5 along the frames. Clearly, TRPCAG recovers a low-rank which is qualitatively equivalent to the other methods in a time which is 100 times less than RPCA and RPCAG and an order of magnitude less as compared to FRPCAG. Furthermore, TRPCAG requires the same time as  sampling based CPCA 
method but recovers a better quality low-rank structure as seen from the 3rd row. The performance quality of TRPCAG is also evident from the point cloud experiment in Fig. \ref{fig:dancer} where we recover the low-rank point cloud of a dancing person after adding sparse noise to it. Experiments on two more videos (shopping mall and escalator) and two point cloud datasets (walking dog and dancing girl) are presented in Figs. \ref{fig:video_trpcag_extra}, \ref{fig:escalator} \& \ref{fig:point_cloud} in Appendices.

\textbf{Scalability of TRPCAG on 3D video:} To show the scalability of TRPCAG as compared to TRPCA, we made a video of snowfall at the campus and tried to separate the snow-fall from the low-rank background via both methods. For this 1.5GB video of dimension $1920 \times 1080 \times 500$, TRPCAG (with core tensor size $100 \times 100 \times 50$) took less than 3 minutes, whereas TRPCA did not converge even in 4 hours. The result obtained via TRPCAG is visualized in Fig. \ref{fig:snow}. The complete videos of actual frames, low-rank and sparse components, for all the above experiments are provided with the supplementary material of the paper.

\vspace{-0.3cm} 
\subsection{Robust Subspace Recovery}
\vspace{-0.2cm} 
It is imperative to point out that the inverse problems of the form eq. \ref{eq:ginvt} implicitly determine the subspace structures (singular vectors) from grossly corrupted tensors. Examples of GMLSVD from Section \ref{sec:gmlsvd} correspond to the case of clean tensor. In this section we show that the singular vectors recovered by TRPCAG (Section \ref{sec:trpcag}) from the sparse and grossly corrupted tensors also align closely with those of the clean tensor. Fig. \ref{fig:sv_wedge} shows the example of the same wedge image from 2D COIL20 dataset that was used in Section \ref{sec:gmlsvd}. The leftmost plots show a clean and sparsely corrupted sample wedge image. Other plots in the 1st row show the 1st singular vector recovered by various low-rank recovery methods, SVD, FRPCAG, RPCA, RPCAG and TRPCAG and the 2nd row shows the alignment of the 1st 20 singular vecors recovered by these methods with those of the clean tensor. The rightmost plots correspond to the case of TRPCAG and clearly show that the recovered subspace is robust to sparse noise. Examples of YALE, COIL20 and ORL datasets are also shown in Fig. \ref{fig:singular_vectors} in the Appendices.

\vspace{-0.2cm}
\subsection{Effect of parameters} 
\vspace{-0.2cm}
Due to space constraints we study the effect of parameters $\gamma$, the multilinear rank $k$ and the power $\alpha$ of $g(\cdot)$ in eq. \ref{eq:ginvt} in Fig. \ref{fig:params} of the Appendices. To summarize, once the parameters $\gamma$ and $k$ are tuned, eq. \ref{eq:ginvt} becomes  insensitive to parameter $\alpha$.

\vspace{-0.2cm}
\section{Conclusion}
\vspace{-0.2cm}
Inspired by the fact that the first few eigenvectors of the $\K$-graph provide a smooth basis for data, we present a graph based low-rank tensor decomposition model. Any low-rank tensor can be decomposed as a multilinear combination of the lowest $k$ eigenvectors of the graphs constructed from the rows of the flattened modes of the tensor (MLRTG). We propose a general tensor based convex optimization framework which overcomes  the computational and memory burden of standard tensor problems and enhances the performance in the low SNR regime. More specifically we demonstrate two applications of MLRTG 1) Graph based MLSVD and 2) Tensor Robust PCA on Graphs for 4 artificial and 12 real datasets under different noise levels. Theoretically, we prove the link of MLRTG to the joint stationarity of signals on graphs. We also study the performance guarantee of the proposed general optimization framework by connecting it to a factorized graph regularized problem.

\clearpage
{\small
\bibliographystyle{ieee}
\bibliography{pcabib}
}

\onecolumn
\clearpage
\newpage
\appendix

\section{Appendices}

\subsection{Proof of Theorem \ref{thm:props}}\label{sec:proof_props}
In the main body of the paper, we assume (for simplicity of notation) that the graph multilinear rank of the tensor is equal in all the modes $(k,k,k)$. However, for the proof of this Theorem we adopt a more general notation and assume a different rank $k_\mu$ for every mode $\mu$ of the tensor.

We first clearly state the properties of MLRTG:

\begin{enumerate}
\item{ \textbf{Property 1: Joint Approximate Graph Stationarity:} \begin{defn}\label{def:jgwss}
A tensor $\Y^* \in \MLT$ satisfies Joint Approximate Graph Stationarity, i.e, 
its ${\mu}^{th}$ matricization / flattening  $Y_\mu $ satisfies approximate graph stationarity  $\forall \mu$:
\begin{equation}\label{eq:gsc}
s(\Gamma_\mu) = \frac{\|\di(\Gamma_\mu)\|^2_F}{\|\Gamma_\mu\|^2_F} \approx 1, 
\end{equation}
\end{defn}}

\item {\textbf{Property 2: Low Frequency Energy Concentration:} \begin{defn}\label{def:pc}
  A tensor $\Y^* \in \MLT$ satisfies the low-frequency energy concentration property, i.e, the energy is concentrated in the top  entries of the graph spectral covariance matrices $\Gamma_\mu, \forall \mu = 1,2,\cdots,d$. 
  \begin{align}\label{eq:psd}
& \hat{s}(\Gamma_\mu)  = \frac{\|\Gamma_\mu(1:k_\mu, 1:k_\mu)\|^2_F}{\|\Gamma_\mu\|^2_F}  \approx 1 
\end{align}
  \end{defn}
}
\end{enumerate}

\subsubsection{Assumption: The existence of Eigen Gap} For the proof of this Theorem, we assume that the `eigen gap condition' holds. In order to understand this condition, we guide the reader step by step by defining the following terms:
\begin{enumerate}
\item Cartesian product of graphs
\item Eigen gap of a cartesian product graph
\end{enumerate}

\begin{defn}[\textbf{Eigen Gap of a graph}]\label{def:egap}
A graph Laplacian $L$ with an eigenvalue decomposition $L = P\Lambda P^\top$  is said to have an eigen gap if there exists a $k > 0$, such that $\lambda_{k} \ll \lambda_{k+1}$ and $\lambda_k / \lambda_{k+1} \approx 0$.
\end{defn}

\textbf{Separable Eigenvector Decomposition of a graph Laplacian:}
The eigenvector decomposition of a combinatorial Laplacian $L$ of a graph $\G$ possessing an eigen gap (satisfying Definition \ref{def:egap}) can be written as:
$$L = P\Lambda P^\top = P_k \Lambda_k P_k^\top +  \bar{P}_k \bar{\Lambda}_k \bar{P}_k^\top,$$
where  $P_k \in \Rbb^{n \times k}, \bar{P}_k \in \Rbb^{n \times (n - k)}, \Lambda_k \in \Rbb^{k\times k}, \bar{\Lambda}_k \in \Rbb^{(n-k) \times (n-k)}$  denote the first $k$ low frequency eigenvectors and eigenvalues in $P, \Lambda$.

 For a $\K$-nearest neighbors graph constructed from a $k_\mu$-clusterable data $Y_\mu$ (along rows) one can expect $\lambda_{\mu k_\mu}/\lambda_{\mu k_\mu + 1} \approx 0$ as $\lambda_{\mu k_\mu} \approx 0$ and $\lambda_{\mu k_\mu} \ll \lambda_{\mu k_\mu + 1}$.

\begin{defn}[\textbf{Cartesian product}]\label{def:cp}
Suppose we have two graphs $\G_1 (\V_1, \E_1, W_1, \D_1)$ and $\G_2(\V_2, \E_2, W_2, \D_2)$ where the tupple represents (vertices, edges, adjacency matrix, degree matrix). The Cartesian product $\G = \G_1 \times \G_2$ is a graph such that the vertex set  is the Cartesian product $\V=\V_1 \times \V_1$ and the edges are set according to the following rules: any two vertices $(u_1,v_1)$ and $(u_2,v_2)$ are adjacent in $\G$ if and only if either
\begin{itemize}
\item  $u_1 = u_2$ and $v_1$ is adjacent with $v_2$ in $\G_2$
\item  $v_1 = v_2$ and $u_1$ is adjacent with $u_2$ in $\G_1$.
\end{itemize}
    
The adjacency matrix of the Cartesian product graph $\G$ is given by the matrix Cartesian product:
    \[
    W=W_1 \times W_2 = W_1\otimes I_2 + I_1 \otimes W_2
    \]
\end{defn} 
\begin{lemma}\label{lem:cp}
The degree matrix $\D$ of the graph $\G = \G_1 \times \G_2$ which satisfies definition \ref{def:cp} is given as
\[
\D = \D_1 \otimes I_2 + I_1 \otimes \D_2 = \D_1 \times \D_2
\]
The \textbf{combinatorial} Laplacian of $\G$ is:
\[
L = L_1\otimes I_2 + I_1\otimes L_2 = L_1\times L_2
\] 
\end{lemma}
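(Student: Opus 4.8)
The plan is to establish both identities by a direct, vertex-by-vertex computation that uses only the adjacency rules of Definition \ref{def:cp} and the block structure of the Kronecker product; no spectral machinery beyond bilinearity is needed. First I would fix once and for all the lexicographic ordering of the product vertex set $\V = \V_1 \times \V_2$ for which a matrix of the form $A \otimes I_2 + I_1 \otimes B$ has, in its $(u_1,u_2)$ block (indexed by $u_1,u_2 \in \V_1$), the submatrix $A_{u_1 u_2} I_2 + \delta_{u_1 u_2} B$. Under this convention the claimed adjacency matrix $W = W_1 \otimes I_2 + I_1 \otimes W_2$ reproduces exactly the two bullet points of Definition \ref{def:cp}: the $((u_1,v_1),(u_2,v_2))$ entry of $W$ equals $(W_2)_{v_1 v_2}$ when $u_1 = u_2$, equals $(W_1)_{u_1 u_2}$ when $v_1 = v_2$, and vanishes otherwise --- here I use that $W_1,W_2$ have zero diagonal, so that no spurious self-loops are created and the two cases do not overlap on off-diagonal entries.

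Next I would read off the degree of an arbitrary product vertex $(u,v)$. By the adjacency rules its incident edges are precisely the edges to $(u,v')$ with $v' \sim v$ in $\G_2$ together with the edges to $(u',v)$ with $u' \sim u$ in $\G_1$, and these two families are disjoint; hence its weighted degree is $\sum_{v'} (W_2)_{v v'} + \sum_{u'} (W_1)_{u u'} = (\D_2)_{vv} + (\D_1)_{uu}$. Collecting over all product vertices, $\D$ is the diagonal matrix whose $((u,v),(u,v))$ entry is $(\D_1)_{uu} + (\D_2)_{vv}$, which --- being diagonal --- coincides with $\D_1 \otimes I_2 + I_1 \otimes \D_2$ in the ordering fixed above. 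By the notational convention $A \times B := A \otimes I + I \otimes B$ introduced in Definition \ref{def:cp}, this is $\D_1 \times \D_2$, which proves the first identity.

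The Laplacian identity then follows purely formally, by subtraction and distributivity of $\otimes$ over matrix addition:
\[
L = \D - W = (\D_1 \otimes I_2 + I_1 \otimes \D_2) - (W_1 \otimes I_2 + I_1 \otimes W_2) = (\D_1 - W_1) \otimes I_2 + I_1 \otimes (\D_2 - W_2) = L_1 \otimes I_2 + I_1 \otimes L_2,
\]
which equals $L_1 \times L_2$ by the same convention. I do not expect a genuine obstacle here: the only two points requiring care are (i) committing to one fixed vertex ordering so that the Kronecker pattern matches the combinatorial edge description, and (ii) the zero-diagonal (loop-free) assumption on the $W_\mu$, which is what lets $\D$ be recovered as the row-sum diagonal of $W$ so that the degree computation closes cleanly. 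For completeness I would note that the spectral consequences invoked later --- the eigenvalues of $L$ are the pairwise sums of those of $L_1$ and $L_2$, with eigenvectors the corresponding Kronecker products --- are immediate from $L = L_1 \otimes I_2 + I_1 \otimes L_2$, though they are not part of the present statement.
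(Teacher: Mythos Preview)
Your proposal is correct, and its overall shape matches the paper's: first derive $\D$, then obtain $L$ by the identical subtraction-and-distributivity step $L = \D - W = (\D_1 - W_1)\otimes I_2 + I_1\otimes(\D_2 - W_2)$. The one genuine difference is in how the degree identity is established. You compute each diagonal entry combinatorially, reading off $(\D_1)_{uu} + (\D_2)_{vv}$ from the two disjoint families of edges incident to $(u,v)$ and noting explicitly the zero-diagonal (loop-free) assumption on the $W_\mu$. The paper instead computes the degree vector in one algebraic stroke, $d = W(\ones_1\otimes\ones_2)$, and pushes through the Kronecker mixed-product identity $(A_1\otimes B_1)(A_2\otimes B_2) = (A_1 A_2)\otimes(B_1 B_2)$ to get $d = d_1\otimes\ones_2 + \ones_1\otimes d_2$. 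Your route is more elementary and makes the ordering and loop-free hypotheses explicit; the paper's route is a cleaner one-liner once the mixed-product rule is in hand. Either is perfectly adequate here.
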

\begin{proof}
The adjacency matrix of the Cartesian product graph $\G$ is given by the matrix Cartesian product:
    \[
    W=W_1\times W_2 = W_1\otimes I_2 + I_1 \otimes W_2
    \]

With this definition of the adjacency matrix, it is possible to write the degree matrix of the Cartesian product graph as cartesian product of the factor degree matrices:
\begin{align*}
d &= W(\ones_1\otimes \ones_2) = (W_1 \otimes I_2 + I_1 \otimes W_2)(\ones_1\otimes \ones_2)\\
&=(W_1 \otimes I_2) (\ones_1\otimes \ones_2)+(I_1 \otimes W_2) (\ones_1\otimes \ones_2)\\
&=(W_1 \ones_1 ) \otimes(I_2 \ones_2) + (I_1\ones_1)\otimes ( W_2 \ones_2)\\
&= d_1 \otimes \ones_1 + \ones_2 \otimes d_2 
\end{align*}
where we have used the following property
\begin{equation*}\label{eq:kron2}
({A_1\otimes B_1})({A_2\otimes B_2}) = ({A_1 A_2})\otimes({B_1 B_2}).
\end{equation*}
This implies the following matrix equality:
\[
\D = \D_1 \otimes I_2 + I_1 \otimes \D_2 = \D_1 \times \D_2
\]
The \textbf{combinatorial} Laplacian of the cartesian product is:
\begin{align*}
L &= \D-W = \D_1 \times \D_2 -W_1\times W_2\\
  &= (\D_1\otimes I_2 + I_1 \otimes \D_2) - (W_1\otimes I_2 + I_1 \otimes W_2)\\
  &= (\D_1-W_1)\otimes I_2 + I_1\otimes (\D_2-W_2) \\
  &= L_1\otimes I_2 + I_1\otimes L_2 = L_1\times L_2
\end{align*}
\end{proof}

 For our purpose we define the Eigen gap of the Cartesian Product Graph as follows:

\begin{defn}[\textbf{Eigen Gap of a Cartesian Product Graph}]\label{def:sgap}
A cartesian product graph as defined in \ref{def:cp},  is said to have an eigen gap if there exist $k_1, k_2$, such that, $ \max\{\lambda_{1 k_1},\lambda_{2 k_2}\} \ll \min\{{\lambda}_{1 k_1 +1},{\lambda}_{2 k_2 +1}\}$, where $\lambda_{\mu k_\mu}$ denotes the $k_\mu$ eigenvalue of the ${\mu}^{th}$ graph Laplacian $L_\mu$.
\end{defn}

 \subsubsection{Consequence of `Eigen gap assumption': Separable eigenvector decomposition of a cartesian product graph}
 
The eigen gap assumption (definition \ref{def:sgap}) is important to define the notion of the `Separable eigenvector decomposition of a cartesian product graph' which will be used in the final steps of this Theorem. We define the eigenvector decomposition of a Laplacian of cartesian product of two graphs. The definition can be extended in a straight-forward manner for more than two graphs as well.

\begin{lemma}[\textbf{Separable Eigenvector Decomposition of a Cartesian Product Graph}]\label{lem:EV}
For a cartesian product graph as defined in \ref{def:cp}, the eigenvector decomposition can be written in a separable form as:
\begin{align*}
L & = (P_1\otimes P_2)(\Lambda_1 \times \Lambda_2)( P_1 \otimes P_2)^\top = P_{k} \Lambda_k P_k^\top + \bar{P}_{k} \bar{\Lambda}_k \bar{P}_k^\top  = P\Lambda P^\top, 
\end{align*} 
where  $P_k \in \Rbb^{n_1 n_2 \times k_1 k_2}, \bar{P}_k \in \Rbb^{n_1 n_2 \times (n_1 - k_1 )(n_2 - k_2) }, \Lambda_k \in \Rbb^{k_1 k_2 \times k_1 k_2}, \bar{\Lambda}_k \in \Rbb^{(n_1-k_1)(n_2-k_2) \times (n_1-k_1)(n_2-k_2)}$ and $k$ denotes the first $k_1 k_2$ low frequency eigenvectors and eigenvalues in $P, \Lambda$.
\end{lemma}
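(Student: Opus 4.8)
The plan is to reduce the statement to Kronecker‑product algebra layered on top of Lemma~\ref{lem:cp}. First I would take the combinatorial Laplacian of the Cartesian product from Lemma~\ref{lem:cp}, $L = L_1\otimes I_2 + I_1\otimes L_2$, and substitute the orthonormal eigendecompositions $L_\mu = P_\mu\Lambda_\mu P_\mu^\top$, $\mu = 1,2$. Applying the mixed‑product identity $(A_1\otimes B_1)(A_2\otimes B_2)=(A_1A_2)\otimes(B_1B_2)$ together with $P_\mu^\top P_\mu = I_\mu$, each of the two Kronecker summands factors through the common orthogonal matrix $P := P_1\otimes P_2$:
\[
L_1\otimes I_2 = (P_1\otimes P_2)(\Lambda_1\otimes I_2)(P_1\otimes P_2)^\top,\qquad I_1\otimes L_2 = (P_1\otimes P_2)(I_1\otimes\Lambda_2)(P_1\otimes P_2)^\top .
\]
Summing and using the notation $\Lambda_1\times\Lambda_2 := \Lambda_1\otimes I_2 + I_1\otimes\Lambda_2$ of Lemma~\ref{lem:cp} gives $L = P(\Lambda_1\times\Lambda_2)P^\top$. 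Since a Kronecker product of orthogonal matrices is orthogonal and $\Lambda_1\times\Lambda_2$ is diagonal with nonnegative entries $\lambda_{1i}+\lambda_{2j}$, this is a genuine eigenvector decomposition, i.e.\ it coincides with the generic form $P\Lambda P^\top$; this establishes the first and the last equalities of the claim.

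For the middle (\emph{separable}) equality I would bring in the eigen gap hypothesis of Definition~\ref{def:sgap}: choose $k_1,k_2$ with $\max\{\lambda_{1k_1},\lambda_{2k_2}\}\ll\min\{\lambda_{1,k_1+1},\lambda_{2,k_2+1}\}$ and partition each factor block‑wise, $P_\mu = [\,P_{\mu k_\mu}\;\;\bar P_{\mu k_\mu}\,]$, $\Lambda_\mu = \diag(\Lambda_{\mu k_\mu},\bar\Lambda_{\mu k_\mu})$. Expanding $P = P_1\otimes P_2$ into its four Kronecker blocks, the block $P_k := P_{1k_1}\otimes P_{2k_2}$ carries exactly the eigenvalues $\lambda_{1i}+\lambda_{2j}$ with $i\le k_1$ and $j\le k_2$, all bounded above by $\lambda_{1k_1}+\lambda_{2k_2}$, whereas every remaining eigenvector --- those in $P_{1k_1}\otimes\bar P_{2k_2}$, in $\bar P_{1k_1}\otimes P_{2k_2}$, or in $\bar P_{1k_1}\otimes\bar P_{2k_2}$ --- has at least one index above the gap and hence eigenvalue at least $\min\{\lambda_{1,k_1+1},\lambda_{2,k_2+1}\}$. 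Collecting the latter columns into $\bar P_k$ with eigenvalue matrix $\bar\Lambda_k$ and substituting this block structure into $L = P\Lambda P^\top$ yields $L = P_k\Lambda_k P_k^\top + \bar P_k\bar\Lambda_k\bar P_k^\top$ with $\Lambda_k = \Lambda_{1k_1}\times\Lambda_{2k_2}$.

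I expect the only non‑mechanical step to be this index‑partitioning: Kronecker products do not return eigenvalues in sorted order, so one must certify that $P_k$ really does collect the $k_1 k_2$ lowest‑frequency eigenvectors and that all other eigenvectors lie on the far side of the gap --- and this is precisely where Definition~\ref{def:sgap} enters, through the elementary bound $\lambda_{1i}+\lambda_{2j}\ge\min\{\lambda_{1,k_1+1},\lambda_{2,k_2+1}\}$ whenever $i>k_1$ or $j>k_2$. Everything else is bookkeeping with the Kronecker identities already used in the proof of Lemma~\ref{lem:cp}. Finally, the same argument carries over verbatim to a Cartesian product of $d>2$ graphs, with the two‑fold Kronecker product and the pairwise sum of eigenvalues replaced by their $d$‑fold counterparts.
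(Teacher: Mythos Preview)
Your proposal is correct and follows essentially the same approach as the paper: both start from Lemma~\ref{lem:cp}, substitute the factor eigendecompositions, and use the mixed-product Kronecker identity to obtain $L=(P_1\otimes P_2)(\Lambda_1\times\Lambda_2)(P_1\otimes P_2)^\top$, then partition each $P_\mu$ into low/high-frequency blocks and invoke Definition~\ref{def:sgap} to isolate $P_k=P_{1k_1}\otimes P_{2k_2}$ from the remaining three cross blocks. Your discussion of why the gap is needed to certify that $P_k$ really collects the lowest $k_1k_2$ eigenvalues is in fact more explicit than the paper's, which simply asserts the separation after writing out the zero-padded expansion.
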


\begin{proof}
The eigenvector matrix of the cartesian product graph can be derived as:
\begin{align*}
L &= L_1\otimes I_2 + I_1\otimes L_2 = (P_1\Lambda_1 P_1^\top)\otimes I_2 +I_1\otimes (P_2\Lambda_2 P_2^\top)\\
&=(P_1\Lambda_1 P_1^\top)\otimes (P_2 I_2 P_2^\top) + (P_1 I_1 P_1^\top)\otimes (P_2\Lambda_2 P_2^\top)\\
&=(P_1\otimes P_2)(\Lambda_1 \otimes I_2)( P_1^\top \otimes P_2^\top) + (P_1\otimes P_2)(I_1 \otimes \Lambda_2)( P_1^\top \otimes P_2^\top)\\
&=(P_1\otimes P_2)(\Lambda_1 \otimes I_2 + I_1 \otimes \Lambda_2)( P_1 \otimes P_2)^\top\\
&= (P_1\otimes P_2)(\Lambda_1 \times \Lambda_2)( P_1 \otimes P_2)^\top = P\Lambda P^\top
\end{align*}
So the eigenvector matrix is given by the Kronecker product between the eigenvector matrices of the factor graphs and the eigenvalues are the element-wise  summation  between all the possible pairs of factors eigenvalues, i.e. the cartesian product between the eigenvalue matrices.

\begin{align}\label{eq:laps}
L_1 \kr I_2 + I_1 \kr L_2 & = (P_1 \kr P_2)(\Lambda_1 \times \Lambda_2)(P_1 \kr P_2)^\top \nonumber \\
& = \big((P_{1 k_1} + \bar{P}_{1 k_1}) \kr (P_{2 k_2} + \bar{P}_{2 k_2})\big)\big((\Lambda_{1 k_1} + \bar{\Lambda}_{1 k_1}) \times (\Lambda_{2 k_2} + \bar{\Lambda}_{2 k_2})\big)\big((P_{1 k_1} + \bar{P}_{1 k_1})\kr (P_{2 k_2} + \bar{P}_{2 k_2})\big)^\top,
\end{align}
where we assume that $P_{k_\mu} \in \Rbb^{n_\mu \times n_\mu}$ with the first $k_\mu$ columns in $P_1$ and 0 appended for others and $\bar{P}_{k_\mu} \in \Rbb^{n_\mu \times n_\mu}$ with the first $k_\mu$ columns equal to 0 and others copied from $P_1$. The same holds for $\Lambda_\mu, \bar{\Lambda}_\mu$ as well. Now
\begin{align*}
(P_{1 k_1} + \bar{P}_{1 k_1}) \kr (P_{2 k_2} + \bar{P}_{2 k_2}) & = \big(P_{1 k_1} \kr P_{2 k_2} + P_{1 k_1}\kr \bar{P}_{2 k_2} + \bar{P}_{1 k_1} \kr  P_{2 k_2} +\bar{P}_{1 k_1} \kr  \bar{P}_{2 k_2} \big) \nonumber \\
& = P_{k_1 k_2} + \bar{P}_{k_1 k_2}, 
\end{align*}
where we use $P_{k_1 k_2} = P_{1 k_1} \kr P_{2 k_2}$. Now removing the zero appended columns we get $P = (P_{k_1 k_2},\bar{P}_{k_1 k_2})$. Now, let
\begin{align*}
(\Lambda_{1 k_1} + \bar{\Lambda}_{1 k_1}) \times (\Lambda_{2 k_2} + \bar{\Lambda}_{2 k_2}) = \Lambda_{k_1 k_2} + \bar{\Lambda}_{k_1 k_2}
\end{align*}
removing the zero appended entries we get $\Lambda = (\Lambda_{k_1 k_2},\bar{\Lambda}_{k_1 k_2})$. For a $\K$-nearest neighbors graph constructed from a $k_1$-clusterable data  (along rows) one can expect $\lambda_{k_1}/{\lambda}_{k_1 + 1} \approx 0$ as $\lambda_{k_1} \approx 0$ and $\lambda_{k_1} \ll {\lambda}_{k_1 + 1}$. Furthermore $\displaystyle\max_{\mu}(\lambda_{ k_{\mu}})\ll \min_{\mu}({\lambda}_{ k_{\mu}+1})$ (definitions \ref{def:egap} \& \ref{def:sgap}). Thus eq. \eqref{eq:laps} can be written as:
\begin{align*}
L_1 \kr I_2 + I_1 \kr L_2 & =  [P_{k_1 k_2} | \bar{P}_{k_1 k_2}] [\Lambda_{k_1 k_2} |\bar{\Lambda}_{k_1 k_2}] [P_{k_1 k_2}| \bar{P}_{k_1 k_2}]^\top
\end{align*}
\end{proof}

\subsubsection{Final steps}

Now we are ready to prove the Theorem. We start by expanding the denominator of the expression from property 2 above. We start with the GSC $\Gamma_\mu$ for any $Y_\mu$ and use $C_\mu = Y_\mu Y^\top_\mu$. For a 3D tensor, we index its modes with $\mu \in \{1,2,3\}$. While considering the eigenvectors of $\mu^{th}$ mode, i.e, $P_{\mu k_\mu}$ we represent the kronecker product of the eigenvectors of other modes as $ P_{-\mu k_{-\mu}}$ for simplicity:
\begin{align}\label{eq:gsc_expand}
\Gamma_\mu  & = P^\top_\mu C_\mu P_\mu \nonumber\\
     & = P^\top_\mu Y_\mu Y^\top_\mu  P_\mu \nonumber\\
     &  = P^\top_\mu [ P_{\mu k_\mu} X_\mu P^\top_{-\mu k_{-\mu}}+ \bar{P}_{\mu k_\mu} \bar{X}_\mu \bar{P}^\top_{-\mu k_{-\mu}}][ P_{\mu k_\mu} X_\mu P^\top_{-\mu k_{-\mu}}+ \bar{P}_{\mu k_\mu} \bar{X}_\mu \bar{P}^\top_{-\mu k_{-\mu}}]^\top P_\mu 
\end{align}
The last step above follows from the eigenvalue decomposition of a cartesian product graph (Lemma \ref{lem:EV}). Note that this only holds if the eigen gap condition (definition \ref{def:sgap}) holds true. 
\begin{align*}
\|\Gamma_\mu\|^2_F  & = \tr(\Gamma^\top_\mu \Gamma_\mu)  \\
& = \tr\Big(\big(P^\top_\mu [ P_{\mu k_\mu} X_\mu P^\top_{-\mu k_{-\mu}}+ \bar{P}_{\mu k_\mu} \bar{X}_\mu \bar{P^\top}_{-\mu k_{-\mu}}][ P_{\mu k_\mu} X_\mu P^\top_{-\mu k_{-\mu}}+ \bar{P}_{\mu k_\mu} \bar{X}_\mu \bar{P}^\top_{-\mu k_{-\mu}}]^\top P_\mu \big)^\top \\
& \times P^\top_\mu [ P_{\mu k_\mu} X_\mu P^\top_{-\mu k_{-\mu}}+ \bar{P}_{\mu k_\mu} \bar{X}_\mu \bar{P}^\top_{-\mu k_{-\mu}}][ P_{\mu k_\mu} X_\mu P^\top_{-\mu k_{-\mu}}+ \bar{P}_{\mu k_\mu} \bar{X}_\mu \bar{P}^\top_{-\mu k_{-\mu}}]^\top P_\mu \Big) \\
& = \tr((X_\mu X^\top_\mu)^\top X_\mu X^\top_\mu) + \tr((\bar{X}_{\mu} \bar{X}^\top_{\mu})^\top \bar{X}_{\mu} \bar{X}^\top_{\mu}) \\
  & = \|X_\mu\|^4_F   +  \| \bar{X}_\mu\|^4_F 
  \end{align*}
 The third step follows from the fact that $P^\top_\mu P_\mu = I$, ${P}^\top_{\mu k_\mu} \bar{P}_{\mu k_\mu} = 0$, ${P}^\top_{-\mu k_{-\mu}} \bar{P}_{-\mu k_{-\mu}} = 0$, ${P}^\top_{\mu k_\mu} {P}_{\mu k_\mu} = I$, $\bar{P}^\top_{\mu k_\mu} \bar{P}_{\mu k_\mu} = I$, ${P}^\top_{-\mu k_{-\mu}} {P}_{-\mu k_{-\mu}} = I$ and  $\bar{P}^\top_{-\mu k_{-\mu}} \bar{P}_{-\mu k_{-\mu}} = I$.

 Let $\{\cdot\}_{1:k_\mu, 1:k_\mu}$ be a matrix operator which represents the selection of the first $k_\mu$ rows and columns of a matrix. Now, expanding the numerator:
 \begin{align*}
 \|\Gamma_\mu(1:k_\mu,1:k_\mu)\|^2_F & =  \|\{P^\top_\mu [ P_{\mu k_\mu} X_\mu X_\mu^\top P^\top_{\mu k_{\mu}}+ \bar{P}_{\mu k_\mu} \bar{X}_\mu \bar{X}^\top_\mu \bar{P}^\top_{\mu k_{\mu}}]P_\mu\}_{1:k_\mu,1:k_\mu}\|^2_F  \\
 & = \|\Big\{\begin{bmatrix} I_{k_\mu \times k_\mu} \\ 0_{(n-k_\mu) \times k_\mu}\end{bmatrix} X_\mu X_\mu^\top {\begin{bmatrix} I_{k_\mu \times k_\mu} \\ 0_{(n-k_\mu) \times k_\mu}\end{bmatrix}}^\top + \begin{bmatrix} 0_{k_\mu \times (n-k_\mu)} \\ I_{(n-k_\mu) \times (n-k_\mu)}\end{bmatrix}\bar{X}_\mu \bar{X}^\top_\mu {\begin{bmatrix} 0_{k_\mu \times (n-k_\mu)} \\ I_{(n-k_\mu) \times (n-k_\mu)}\end{bmatrix}}^\top  \Big\}_{1:k_\mu , 1:k_\mu}\|^2_F \\
 & = \|X_\mu\|^4_F
 \end{align*}
 Finally,
 
  \begin{align*}
  \hat{s}_r(\Gamma^r_\mu) = \frac{\|X_\mu\|^4_F}{ \| X_\mu\|^4_F   +  \| \bar{X}_\mu\|^4_F }
  \end{align*}
From above, $\hat{s}_r(\Gamma^r_\mu) \approx 1$ (Property 2 holds true) if and only if $ \|\bar{X}_\mu\|^2_F  \ll  \| X_\mu\|^2_F$ (Lemma \ref{lem:amlrtg}) and vice versa. 


\subsubsection{Comments}
The reader might note that the whole framework relies on the existence of the eigen gap condition (definition \ref{def:sgap}). Such a notion does not necessarily exist for the real data, however, we clarify a few important things  right away. The existence of eigen gap is not strict for our framework, thus, practically, it performs reasonably well for a broad range of applications even when such a gap does not exist 2) We introduce this notion to study the theoretical side of our framework and characterize the approximation error (Section \ref{sec:theory}). Experimentally, we have shown (Section \ref{sec:results}) that choosing a specific number of eigenvectors of the graph Laplacians, without knowing the spectral gap is good enough for our setup. Thus, at no point throughout this paper we  presented a way to compute this gap.

\subsection{Approximate MLRTG examples}\label{sec:eg_amlrtg}
 \begin{figure*}[htbp]
    \centering
        \centering
        \includegraphics[width=1.0\textwidth]{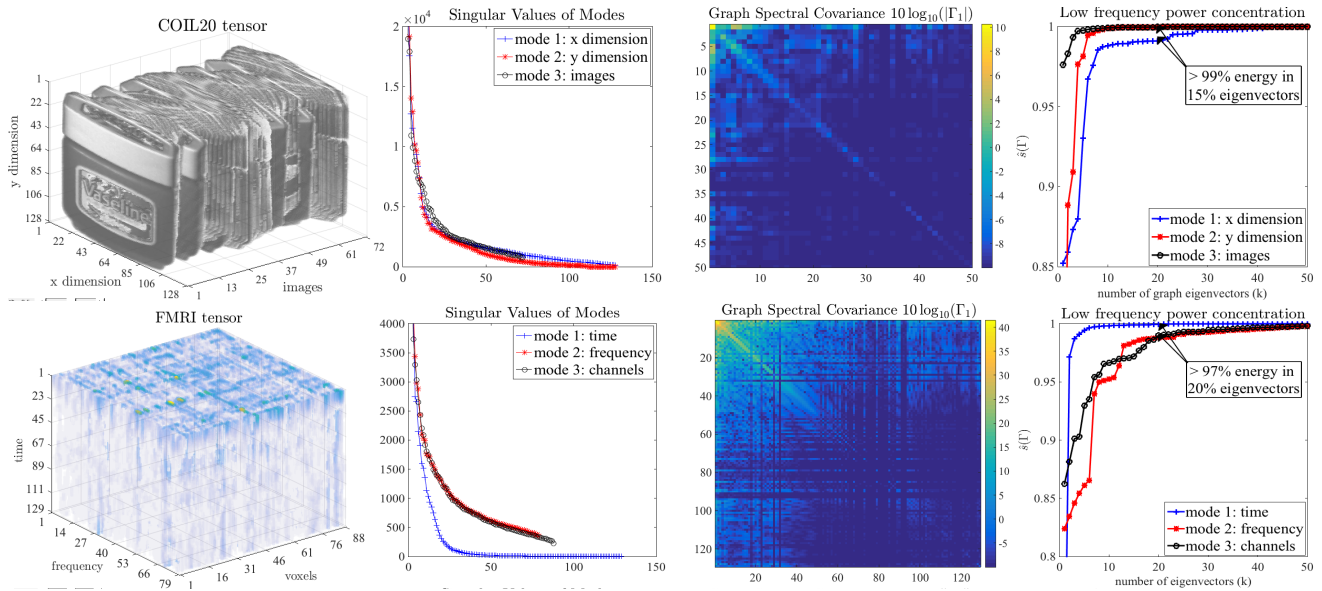}
         \caption{A COIL20 image tensor  and an FMRI tensor. The singular values of the modes, GSC and the energy concentration plot clearly show that the tensor is approximately MLRTG. }
        \label{fig:GSC_two}
    \end{figure*}

\begin{figure*}[htbp]
    \centering
        \centering
        \includegraphics[width=0.8\textwidth]{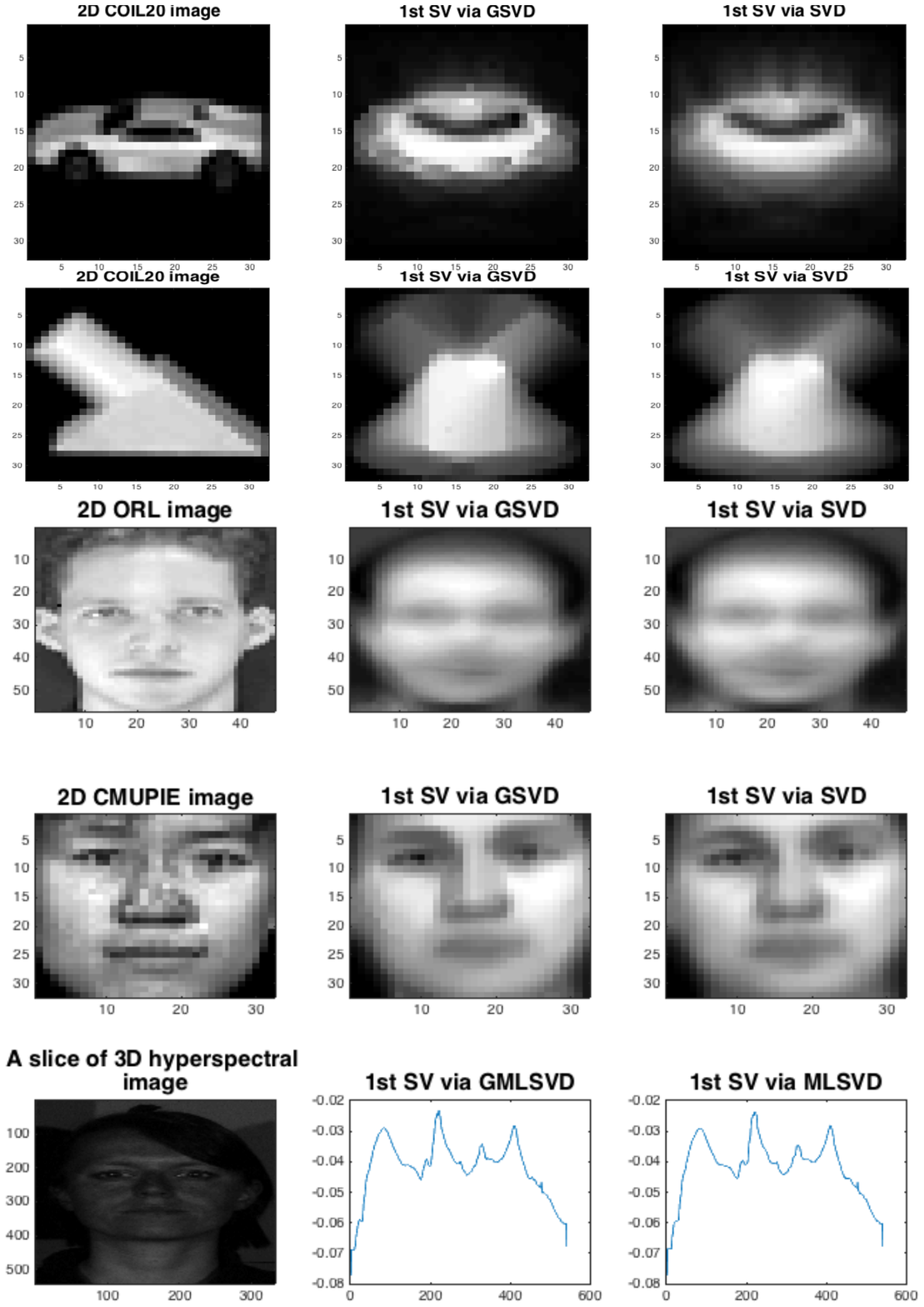}
         \caption{A comparison of the 1st singular vectors obtained via GSVD and SVD for various 2D and 3D real face tensors. In each row, the leftmost plot shows an example image from the database, the middle plot shows the 1st singular vector of mode 1 obtained via GSVD and the right plot shows the 1st singular vector of mode 1 obtained via SVD.  It can be clearly seen that the singular vectors determined by GSVD and SVD are equivalent. }
        \label{fig:eg_algo_gmlsvd}
    \end{figure*}


\subsection{Algorithm for TRPCAG} \label{sec:algo_trpca}

We use Parallel Proximal Splitting method   \cite{combettes2011proximal} to  solve this problem. First, we re-write TRPCAG:

\begin{align*}
\min_{\X} \|P_{1k}X_1 P^\top_{2,3 k}-Y_1\|_1 + \gamma \sum_\mu \|X_\mu\|_{*g(\Lambda_{\mu k})}.
\end{align*}

The objective function above can be split as following:
$$f(x) = f_0(x) + \sum_\mu f_\mu(x),$$
where $f_0(x) = \|P_{1k}X_1 P^\top_{2,3 k}-Y_1\|_1$ and $f_\mu (x) = \|X_\mu\|_{*g(\Lambda_{\mu k})}$

To develop a parallel proximal splitting method for the above problem, we need to define the proximal operators of each of the functions $f_\mu(x)$.

Let $\Omega(X,\tau) : \mathbb{R}^{N} \longrightarrow \mathbb{R}^{N}$ denote the element-wise soft-thresholding matrix operator:
$$\Omega(X,\tau) = \sign(X)\max(|X|-\tau,0),$$ 
then we can define 
$$D_\mu(X_\mu,\tau) = A_{1 \mu}\Omega(R_\mu,\tau) A_{2 \mu}^\top$$
as the singular value thresholding operator for matrix $X_\mu$, where $X_\mu = A_\mu R_\mu A_\mu^\top$ is any singular value decomposition of $X_\mu$. Clearly,

$$\prox_{\gamma f_\mu}(X_\mu) = D_\mu(X_\mu, \gamma g(\Lambda_{\mu k})).$$

In order to define the proximal operator of $f_0$ we use the properties of proximal operators from \cite{combettes2011proximal}. Using the fact that $P^\top_{1k} P_{1k} = I$ and $P^\top_{2,3k}P_{2,3k} = I$, we get

$$\prox_{f_0}(X_1) = X_1 + P^\top_{1k}(\Omega(P_{1k}X_1 P^\top_{2,3k}-Y_1,\alpha)-P_{1k}X_1 P^\top_{2,3k}-Y_1)P_{2,3k},$$

where $\alpha$ is the step size. We represent a matrix $U^{a}_b$ to show the $a^{th}$ iteration and matricization along  $b^{th}$ mode. The algorithm can be stated as following: 

\begin{algorithm}
\caption{Parallel Proximal Splitting Algorithm for TRPCAG}
\label{algorithm2}
\begin{algorithmic}
\State INPUT: matricized tensor $Y_1$, weight matrices $g(\Lambda_{\mu k}), \forall \mu$, parameter $\gamma$, $\epsilon \in (0,1)$.
\State $Z_1^{0,0}, \cdots, Z_1^{3,0} \in \Rbb^{k \times k^2}$, all matricized along mode 1.
\State Set $W^0_1 = \sum_{i=0}^3 Z_1^{i,0}$
\For{$j = 1, \cdots, J$}
\State $P^{0,j}_\mu = \prox_{f_0}(W^j_1,\alpha)$
\For{$\mu = 1, \cdots, 3$}
\State $P^{\mu,j} = \prox_{(\gamma f_\mu)}(W^j_\mu,\gamma g(\Lambda_{\mu k}))$
\EndFor
\State $\epsilon \leq \beta_j \leq 2-\epsilon$
\State $P^j_3 = \sum_i P^{i,j}_3$
\For{$l = 0, \cdots, 3$}
\State $Z^{l,j+1}_3 = Z^{l,j}_3 + \beta_j(2P^j_3 - W^j_3 - P^{i,j}_3)$
\EndFor
\State $W^{j+1}_3 = W^j_3 + \beta_j(P^j_3 - W^j_3)$
\State $W^{j+1}_1 = reshape(W^{j+1}_3)$, $Z^{l,j+1}_1 = reshape(Z^{l,j+1}_3)$
\EndFor
\State OUTPUT: $W^{j+1}_1$
\end{algorithmic}
\end{algorithm}

\subsection{Proof of Theorem \ref{thm:recovery}}\label{sec:proof_thm_recovery}

\subsubsection{Assumptions}
For the proof of this Theorem, we assume the following, which have already been defined in the proof of Theorem 1:
\begin{itemize}
\item \textbf{Eigen gap assumption:} For a $\K$-nearest neighbors graph constructed from a $k^*$-clusterable data  one can expect $\lambda_{\mu k^*}/\lambda_{\mu k^* + 1} \approx 0$ as $\lambda_{\mu k^*} \approx 0$ and $\lambda_{\mu k^*} \ll \lambda_{\mu k^* + 1}$ (Definition \ref{def:egap}).
\item \textbf{Separable Eigenvector Decomposition of a graph Laplacian:} ${L}_\mu = P_{\mu k} \Lambda_{\mu k} P_{\mu k}^\top = P_{\mu k^*} \Lambda_{\mu k^*} P^{\top}_{\mu k} + \bar{P}_{\mu k^*} \bar{\Lambda}_{\mu k^*} \bar{P}^{\top}_{\mu k^*} $, where $\Lambda_{\mu k^*} \in \Re^{k^* \times k^*}$ is a diagonal matrix of lower eigenvalues  and    $\bar{\Lambda}_{\mu k^*} \in \Re^{(k - k^*) \times (k - k^*)}$ is also a diagonal matrix of higher graph eigenvalues. All values in $\Lambda_\mu$ are sorted in increasing order.
\end{itemize}

\subsubsection{Proof}
Now we prove the 3 parts of this Theorem as following:
\begin{enumerate}
\item The existence of an eigen gap (1st point above) and the definition of MLRTG imply that one can obtain a loss-less compression of the tensor $Y^*$ as:
$$X^* = P_{1k^*}^\top Y P_{2k^*}.$$ 

Note that this compression is just a projection of the rows and columns of $Y^*$ onto the basis vectors which exactly encode the tensor, hence the compression is loss-less. Now, as $X^*$ is loss-less, the singular values of $Y^*$ should be an exact representation of the singular values of $X^*$. Thus, the SVD of $X^* = A_{1k^*}RA_{2k^*}^\top$ implies that $R = S$, where $S$ are the singular values of $Y^*$. Obviously, $V_\mu = P_{\mu k^*} A_{\mu k^*}$ upto a sign permutation because the SVD of $X^*$ is also unique upto a sign permutation (a standard property of SVD).
\item The proof of second part follows directly from that of part 1 (above) and Theorem 1 in \cite{rao2015collaborative}. First, note that for any matrix (2D tensor) $Y \in \Rbb^{n_1 \times n_2}$ eq. \eqref{eq:ginv} can be written as following:
\begin{equation}\label{eq:ginv2}
\min_{X} \phi(M_1(P_{1k}X P^\top_{2k})-Y) + \gamma \|g(\Lambda_{1k})X g(\Lambda_{2k})\|_*,
\end{equation}
where $g(\Lambda_{1k})$ and $g(\Lambda_{2k})$ are diagonal matrices which indicate the weights for the nuclear norm minimization. 

 Let $W_1 = g(\Lambda_{1k})P^\top_{1k}$, $W_2 = P_{2k}g(\Lambda_{2k})$ and $\hat{X} = P_{1k}XP^\top_{2k}$, then we can re-write eq. \eqref{eq:ginv2} as following:
\begin{equation}\label{eq:ginv3}
\min_{\hat{X}} \phi(M(\hat{X})-Y) + \gamma \|W_1 \hat{X} W_2\|_*
\end{equation}

Eq. \eqref{eq:ginv3} is equivalent to the weighted nuclear norm (eq. 11) in \cite{rao2015collaborative}. From the proof of the first part we know that $V_1 = P_{1k}A_{1k}$ and $V_2 = P_{2k} A_{2k}$, thus we can write $\hat{X} = V_1 V^\top_2$. From Theorem 1 in \cite{rao2015collaborative} 
\begin{equation}
\min_{\hat{X}} \|W_1 \hat{X} W_2\|_* = \min_{V_1, V_2} \frac{1}{2}(\|W_1 V_1\|^2_F + \|W_2 V_2\|^2_F), \quad \text{s.t.} \quad \hat{X} = V_1 V^\top_2
\end{equation}

using $g(\tilde{L})_\mu = P_{\mu k} g(\Lambda_{\mu k}) P^\top_{\mu k}$ this is equivalent to the following graph regularized problem:
\begin{align}
\min_{V_1, V_2} & \phi(M(V_1 V^\top_2)-Y) + \gamma_1 \tr(V^\top_1 g(\tilde{L}_1) V_1) + \gamma_2 \tr(V^\top_2 g(\tilde{L}_2) V_2),
\end{align}

\item {To prove the third point we directly work with eq.\eqref{eq:greg} and follow the steps of the proof of Theorem 1 in \cite{shahid2015fast}. We assume the following:
\begin{enumerate}
\item We assume that the observed data matrix $Y$ satisfies $Y = M(Y^*) + E$ where $Y^* \in \MLT$ and $E$ models noise/corruptions. Furthermore, for any $Y^* \in \MLT$ there exists a matrix $C$ such that $Y^* = P_{1k^*} C P_{2k^*}^\top$ and $C = B_{1k^*} B^{\top}_{2k^*}$, so $Y^* = Z^*_1 Z^{*\top}_2$.
\item For the proof of the theorem, we will use the fact that  $V^*_\mu = P_{\mu k^*}A_{\mu k^*} + \bar{P}_{\mu k^*}\bar{A}_{\mu k^*} \in \Re^{n \times k}$, where $P_{\mu k} \in \Rbb^{n \times k^*}$, $\bar{P}_{\mu k^*} \in \Rbb^{n \times (k - k^*)}$ and ${A}_{\mu k^*} \in \Re^{k^* \times k^*}$, $\bar{A}_{\mu k^*} \in \Rbb^{(k-k^*) \times k^*}$. 
\end{enumerate}}

As ${F}^* = V^*_1 V^{*\top}_2$ is the solution of eq.\eqref{eq:greg}, we have

\begin{align}\label{eq:bound1}
& \phi(M({V_1}^*{V_2}^{*\top})-Y) + \gamma_1 \tr({V_1}^{*\top}L_1 {V_1}^*) + \gamma_2 \tr({V_2}^{*\top}L_2 {V_2}^*)  \leq \nonumber \\ 
& \phi(E) + \gamma_1 \tr({Z_1}^{*\top}L_1 {Z_1}^*) + \gamma_2 \tr({Z_2}^{*\top}L_2 {Z_2}^*)
\end{align}

Now using the facts 3b and the eigen gap condition we obtain the following two:

\begin{align}
& \tr({V_1}^{*\top}g(\tilde{L}_1) {V_1}^*) = \tr(A_{1k^*}g(\Lambda_{1k^*})A^{\top}_{1k^*}) + \tr(\bar{A}_{1k^*}g(\bar{\Lambda}_{1k^*})\bar{A}^{\top}_{1k^*}) \nonumber \\
& \geq g(\lambda_{1k^*+1}) \|\bar{A}_{1k^*}\|^2_F = g(\lambda_{1k^*+1}) \|\bar{P}^\top_{1k^*} {V}_1^*\|^2_F
\end{align}
and similarly, 
\begin{equation}
\tr({V_2}^{*\top}g(\tilde{L}_2) {V_2}^*) = g(\lambda_{2k^*+1}) \|\bar{P}^\top_{2k^*} {V}_2^*\|^2_F
\end{equation}

Now, using the fact 3a we get

\begin{equation}
\tr({Z_1}^{*\top}g(\tilde{L}_1) {Z_1}^*) \leq g(\lambda_{1k^*}) \|Z^{*}_1\|^2_F
\end{equation}
and 
\begin{equation}
\tr({Z_2}^{*\top}g(\tilde{L}_2) {Z_2}^*) \leq g(\lambda_{2k^*}) \|Z^{*}_2\|^2_F
\end{equation}

using all the above bounds in eq.\eqref{eq:bound1} yields

$$\phi(F^* - Y) + \gamma_1 g(\lambda_{1k+1}) \|\bar{P}^\top_{1k^*} {V}_1^*\|^2_F + \gamma_2 g(\lambda_{2k^*+1}) \|\bar{P}^\top_{2k^*} {V}_2^*\|^2_F \leq \phi(E) + \gamma \Big(\|Z^*_1\|^2_F \frac{g(\lambda_{1k^*})}{g(\lambda_{1k^*+1})} + \|Z^*_2\|^2_F \frac{g(\lambda_{2k^*})}{g(\lambda_{2k^*+1})} \Big) $$

for our choice of $\gamma_1$ and $\gamma_2$ this yields eq.\eqref{eq:theory}.
\end{enumerate}

\subsection{Experimental Details}\label{sec:experiment_details}
\subsubsection{Generation of artificial datasets}
There are two methods for the generation of artificial datasets:

\textbf{Method 1:} Directly from the eigenvectors of graphs, computed from the modes of a random tensor. 

We describe the procedure for a 2D tensor in detail below:
\begin{enumerate}
\item  Generate a random Gaussian matrix, $Y \in \Rbb^{n \times n}$.
\item Construct a $\K$ graph $\G_1$ between the rows of $Y$ and compute the combinatorial Laplacian $L_1$. 
\item Construct a $\K$ graph $\G_2$ between the rows of $Y^\top$ and compute the combinatorial Laplacian $L_2$.
\item Compute the first $k$ (where $k \ll n$) eigenvectors and eigenvalues of $L_1$ and $L_2$, $(P_{1k}, \Lambda_{1k}), (P_{2k}, \Lambda_{2k})$.
\item Generate a random matrix of the size $X \in \Rbb^{k \times k}$.
\item Generate the low-rank artificial tensor by using $\ve(Y^*) = (P_{1k} \kr P_{2k})\ve(X)$, where $\kr$ denotes the kronecker product.
\end{enumerate}

\textbf{Method 2:} Indirectly, by filtering a randomly generated tensor with the $\K$ combinatorial Laplacians constructed from the flattened  modes of the tensor. We describe the procedure for a 2D tensor below:

\begin{enumerate}
\item Follow steps 1 to 3 of method 1 above.
\item Generate low-rank Laplacians from the eigenvectors: $\hat{L}_{1} = P_{1k} I_{k \times k} P^\top_{1k}$ and $\hat{L}_{2} = P_{2k} I_{k \times k} P^\top_{2k}$.
\item Filter the matrix $Y$ with these Laplacians to get the low-rank matrix $Y^* = \hat{L}_1 Y \hat{L}_2$.
\end{enumerate}

\subsubsection{Information on real datasets}
We report the source, size and dimension of each of the datasets used:

\textbf{2D video datasets:} Three video datasets (900 frames each) collected from the following source: \url{https://sites.google.com/site/backgroundsubtraction/test-sequences}. Each of the frames is vectorized and stacked in the columns of a matrix.

\begin{enumerate}
\item Airport lobby: $25344 \times 900$
\item Shopping Mall: $81920 \times 900$
\item Escalator: $20800 \times 900$
\end{enumerate}

\textbf{3D datasets:} 

\begin{enumerate}
\item Functional Magnetic Resonance Imaging (FMRI): $129 \times 79 \times 1232$ (frequency bins, ROIs, time samples), size 1.5GB.
\item Brain Computer Interface (BCI): $513 \times 256 \times 64$ (time, frequency, channels), size 200MB.
\item Hyperspectral face dataset: $542 \times 333 \times 148$ (y-dimension, x-dimension, spectrum), size 250MB, source: \url{https://scien.stanford.edu/index.php/faces-1-meter-viewing-distance/}.
\item Hyperspectral landscape dataset: $702 \times 1000 \times 148$ (y-dimension, x-dimension, spectrum), size 650MB, source: \url{https://scien.stanford.edu/index.php/landscapes/}.
\item Snowfall video: $1920 \times 1080 \times 500$ (y-dimension, x-dimension, number of frames), size 1.5GB. This video is self made.
\end{enumerate}

\textbf{3D point cloud datasets:} Three 3D datasets ($points \times time \times coordinates$) collected from the following source: \url{http://research.microsoft.com/en-us/um/redmond/events/geometrycompression/data/default.html}. For the purpose of our experiments, we used one of the three coordinates of the tensors only.
\begin{enumerate}
\item dancing man: $1502 \times 573 \times 3$
\item walking dog: $2502 \times 59 \times 3$
\item dancing girl: $2502 \times 720 \times 3$
\end{enumerate}

\textbf{4D dataset:} EEG dataset: $513 \times 128 \times 30 \times 200$, (time, frequency, channels, subjects*trials), size 3GB

\subsubsection{Information on methods and parameters}

\textbf{Robust PCA \cite{candes2011robust}}
\begin{align*}
& \min_X \|X\|_* + \lambda \|S\|_1 \nonumber \\
& \text{s.t.} ~ Y = X + S,
\end{align*}
where $X$ is the low-rank matrix and $S$ is the sparse matrix.

\textbf{Robust PCA on Graphs \cite{shahid2015robust}}

\begin{align*}
& \min_X \|X\|_* + \lambda \|S\|_1 + \gamma \tr(X L_c X^\top) \nonumber \\
& \text{s.t.} ~ Y = X + S,
\end{align*}
where $L_c$ is the combinatorial Laplacian of a $\K$-graph constructed between the columns of Y.

\textbf{Fast Robust PCA on Graphs \cite{shahid2015fast}}

\begin{align*}
\min_X \|Y-X\|_1 + \gamma_c \tr(X L_c X^\top) + \gamma_r \tr(X^\top L_r X),
\end{align*}
where $L_c, L_r$ are the combinatorial Laplacians of $\K$-graphs constructed between the columns and rows of Y.

\textbf{Compressive PCA for low-rank matrices on graphs (CPCA) \cite{shahid2016compressive}}
1) Sample the matrix $Y$ by a factor of $s_r$  along the rows and $s_c$ along the columns  as $\hat{Y} = M_r Y M_c$ and solve FRPCAG to get $\hat{X}$, 2) do the SVD of $\hat{X} = \hat{U}_1 \hat{R} \hat{U}^\top_2$ recovered from step 1, 3) decode the  low-rank $X$ for the full dataset $Y$ by solving the subspace upsampling problems below:

\begin{align*}
& \min_{U_1 \in \Rbb^{n \times k}} \tr(U_1^\top L_r U_1), ~~ \text{s.t.} ~ M_r U_1 = \hat{U}_1 \nonumber \\
& \min_{U_2 \in \Rbb^{n \times k}} \tr(U_2^\top L_c U_2), ~~ \text{s.t.} ~ M_c U_2 = \hat{U}_2
\end{align*}

and then compute $X = U_1 \hat{R} \sqrt{s_r s_c} U^\top_2$.

\begin{table}[htbp]
\centering
\caption{Range of parameter values for each of the models considered in this work.}
\centering
\resizebox{0.6\textwidth}{!}{\begin{tabular}[t]{| c | c | c | } \hline
  \textbf{Model}   & \textbf{Parameters}   & \textbf{Parameter Range} \\\hline
 RPCA \cite{candes2011robust}  & $\lambda$  & $\lambda \in \{\frac{2^{-3}}{\sqrt{\max(n,p)}}:0.1:\frac{2^{3}}{\sqrt{\max(n,p)}}\}$ \\\cline{1-2}
 RPCAG \cite{shahid2015robust} & $\lambda, \gamma$ & $\gamma \in \{2^{-3},2^{-2},\cdots, 2^{10}\}$\\\hline   
  FRPCAG \cite{shahid2015fast} &   $\gamma_{r}, \gamma_{c}$ & $\gamma_{r}, \gamma_{c} \in (0, 30)$    \\\hline
   CPCA &   $\gamma_{r}, \gamma_{c}$ & $\gamma_{r}, \gamma_{c} \in (0, 30)$    \\\cline{2-3}
        &    $k$ (approximate decoder)  & $\hat{R}_{k,k}/\hat{R}_{1,1} < 0.1$  \\\hline
\end{tabular}}\label{tab:models_param}
\end{table}


 \begin{sidewaysfigure}[htbp]
        \centering
        \includegraphics[width=0.9\textwidth]{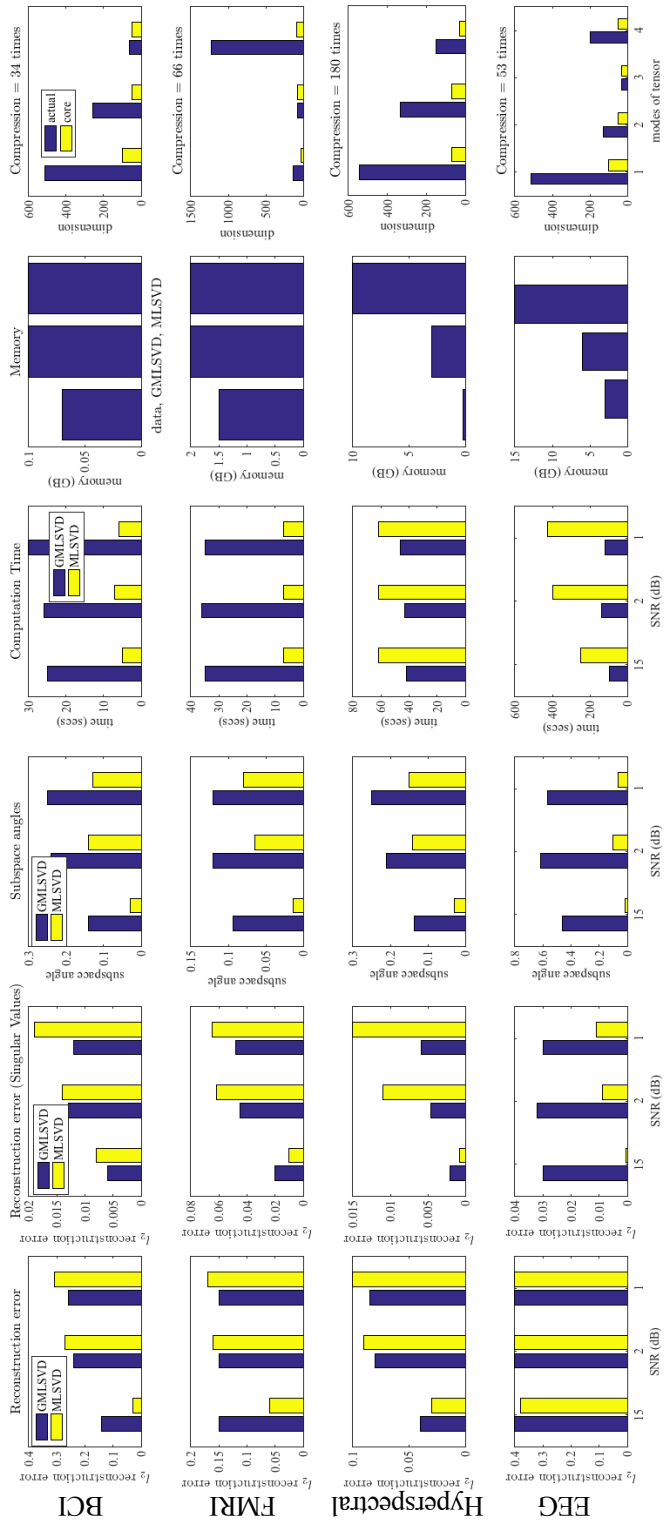}
         \caption{Performance comparison of GMLSVD with MLSVD for four real tensors, Brain Computer Interface (BCI), Functional Magnetic Resonance Imaging (FMRI), Hyperspectral face images and EEG under different SNR scenarios.  The leftmost plots show the $\ell_2$ reconstruction error of the recovered tensor with the clean tensor, the 2nd plots show the reconstruction error for 30 singular values, 3rd plots show the subspace angle between the first 5 singular vectors, 4th plot shows the computation time, 5th plot shows the memory requirement (size of original tensor, memory for GMLSVD, memory for MLSVD) and the rightmost plots show the dimension of the tensor and the core along each mode and hence the amount of compression obtained. Clearly GMLSVD performs better than MLSVD in a low SNR regime. Furthermore, the computation time and memory requirement for GMLSVD is less as compared to MLSVD for big datasets, such as EEG and hyperspectral images. Note that the BCI and FMRI datasets are smaller in size. Thus, even though the time required for GMLSVD is more for these datasets, GMLSVD scales far better than MLSVD.}
        \label{fig:real_data_gaussian_main}
    \end{sidewaysfigure}

    \begin{figure*}[htbp]
    \centering
        \centering
        \includegraphics[width=1.0\textwidth]{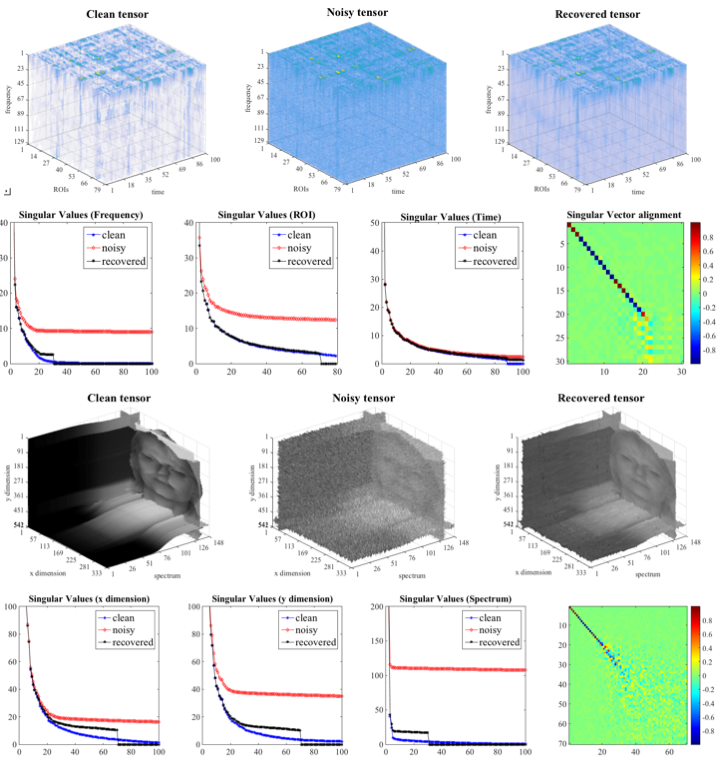}
         \caption{A visualization of the clean, noisy and GMLSVD recovered BCI and hyperspectral face tensors along with a study of the singular values and subspace vectors of various modes. First row shows the BCI tensor and second row shows the singular values recovered for this tensor for various modes. Clearly, the singular values of the recovered tensor (black) are well aligned with those of the clean tensor (blue). Note, how GMLSVD eliminates the effect of noise from the lower singular values. The rightmost plot shows the alignment between the singular vectors of the recovered and clean tensors. This alignment is calculated as $|V_1^\top U_1|$, where $V_1$ are the first mode singular vectors obtained via GMLSVD and $U_1$ are the first mode singular vectors of the clean tensor. Again, note that the first few singular vectors are very well aligned. Third and fourth rows show the same type of results for hyperspectral face tensor.}
        \label{fig:gmlsvd_main}
    \end{figure*}

    
     \begin{figure*}[htbp]
    \centering
        \centering
        \includegraphics[width=0.5\textwidth]{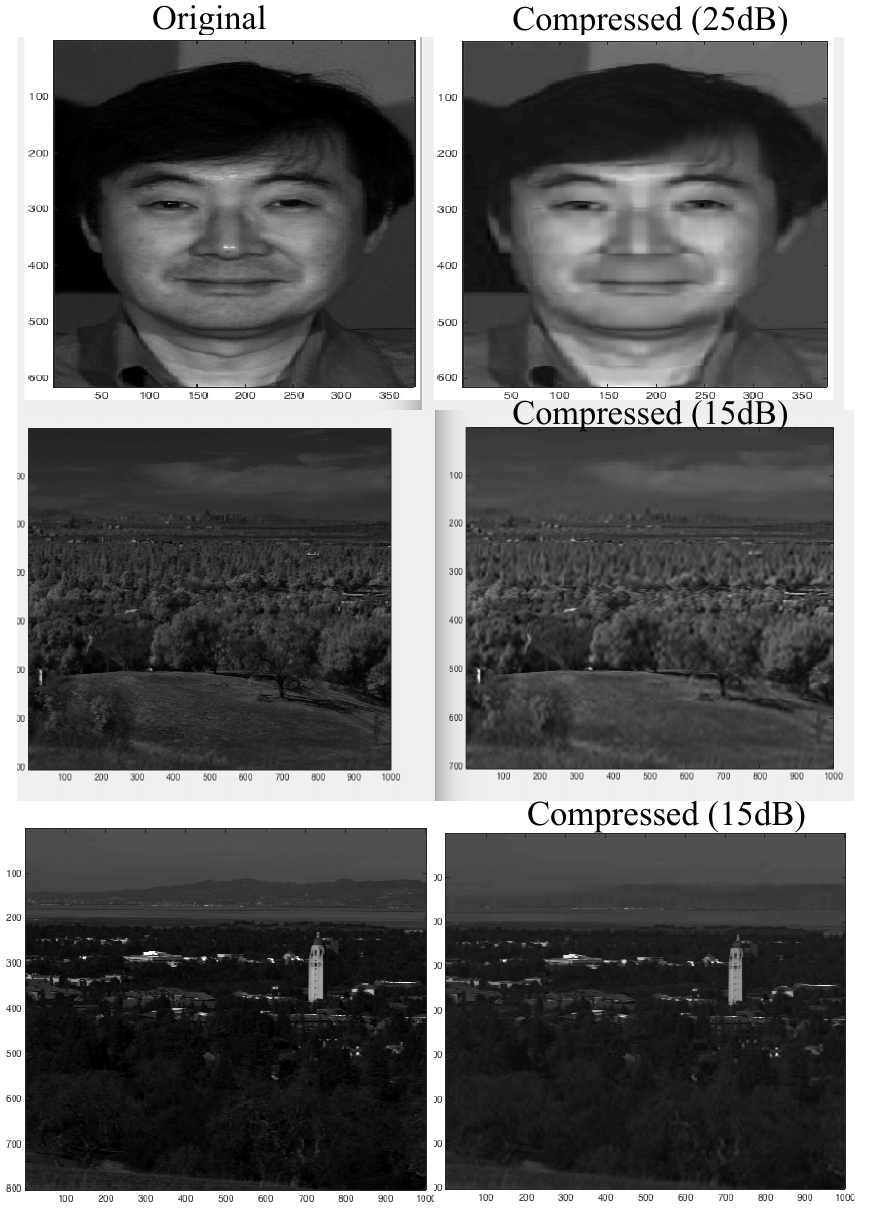}
         \caption{Qualitative and quantitative results for the face ($615 \times 376 \times 148$) and two landscapes ($702 \times 1000 \times 148$ and $801\times 1000 \times 148$) 3D hyperspectral tensors. We used core size of $100 \times 50 \times 20$, $150 \times 150 \times 30$ and $200 \times 200 \times 50$ for the three tensors.}
        \label{fig:hyperspectral_compression_zoomed}
    \end{figure*}
    
\begin{figure}[htbp]
    \centering
        \centering
        \includegraphics[width=0.5\textwidth]{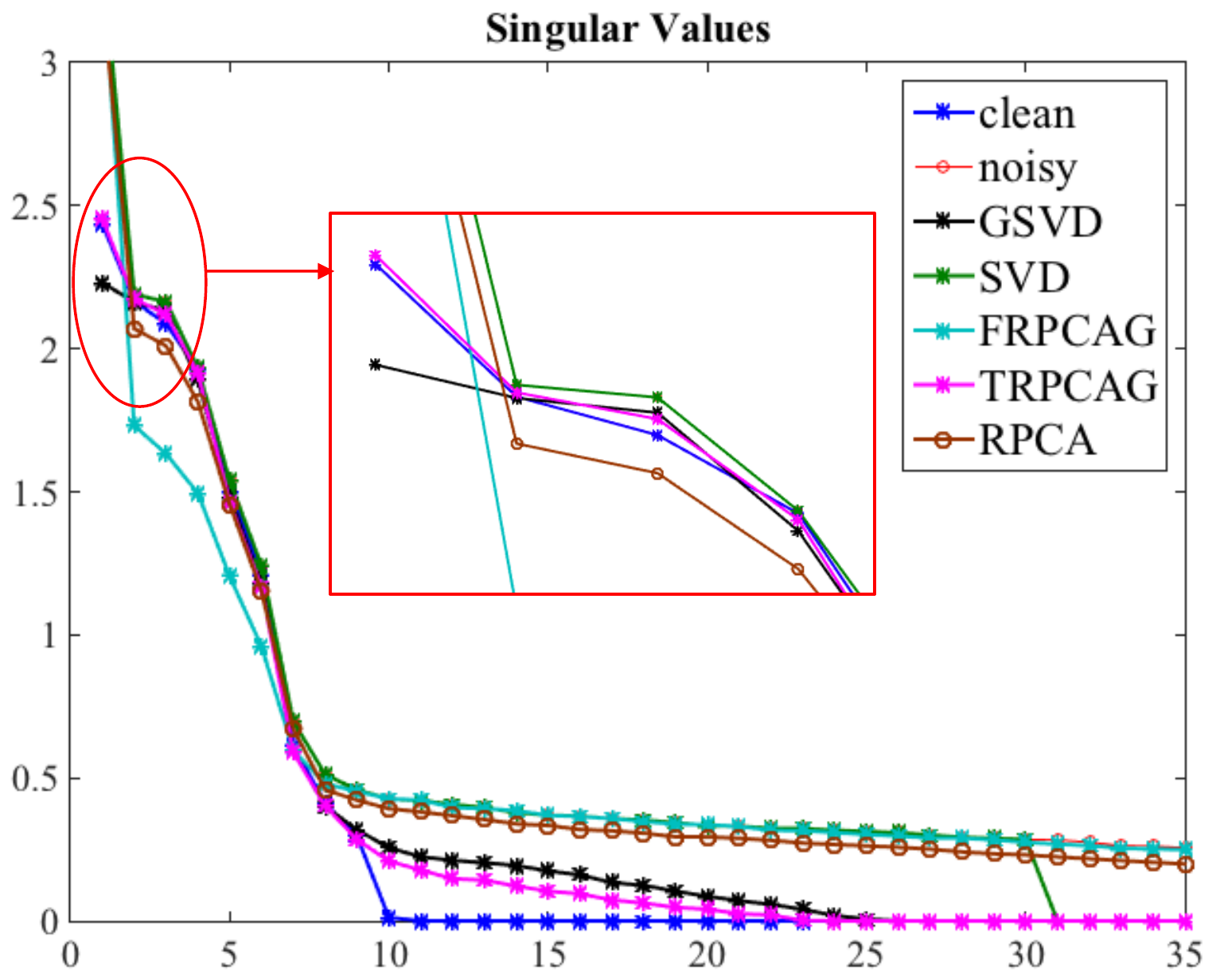}
         \caption{Singular values of GSVD, SVD, FRPCAG, RPCA, TRPCAG and clean data  for a 2D matrix which is corrupted uniformly (10\%) with sparse noise of standard deviation 0.1. Clearly TRPCAG eliminates the effect of noise from lower singular values. In addition note that under sparse noise, the 1st singular value deviates significantly. This effect is also eliminated via TRPCAG.}
        \label{fig:TRPCAG_example}
    \end{figure}
    
     \begin{figure*}[htbp]
    \centering
        \centering
        \includegraphics[width=1.0\textwidth]{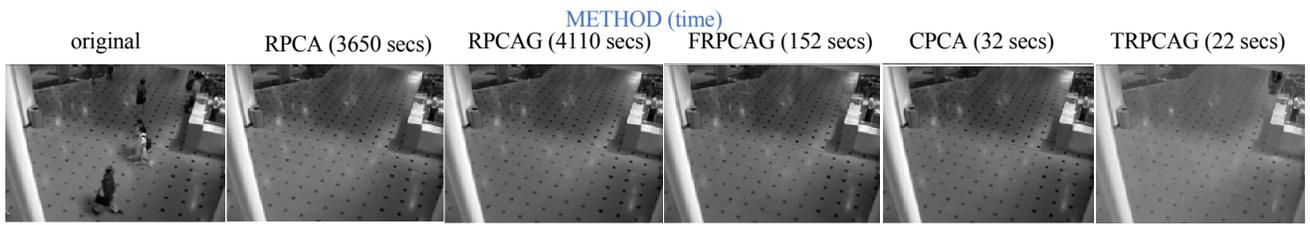}
         \caption{Comparison of the low-rank quality and computation time for TRPCAG, RPCA, RPCAG, FRPCAG and CPCA for a shopping mall lobby surveillance video. TRPCAG recovers a low-rank which is as good as other methods, in a time which is 100 times less as compared to RPCA and RPCAG.}
        \label{fig:video_trpcag_extra}
    \end{figure*}
    
      \begin{figure*}[htbp]
    \centering
        \centering
        \includegraphics[width=0.7\textwidth]{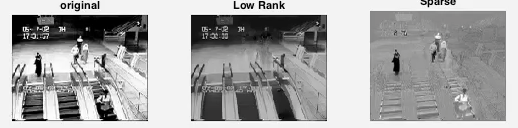}
         \caption{Low-rank and sparse matrix decomposition for an escalator video. left) actual frame from the video, middle) low-rank frame, right) sparse component of the frame.}
        \label{fig:escalator}
    \end{figure*}
    
    \begin{figure*}[htbp]
    \centering
        \centering
        \includegraphics[width=0.6\textwidth]{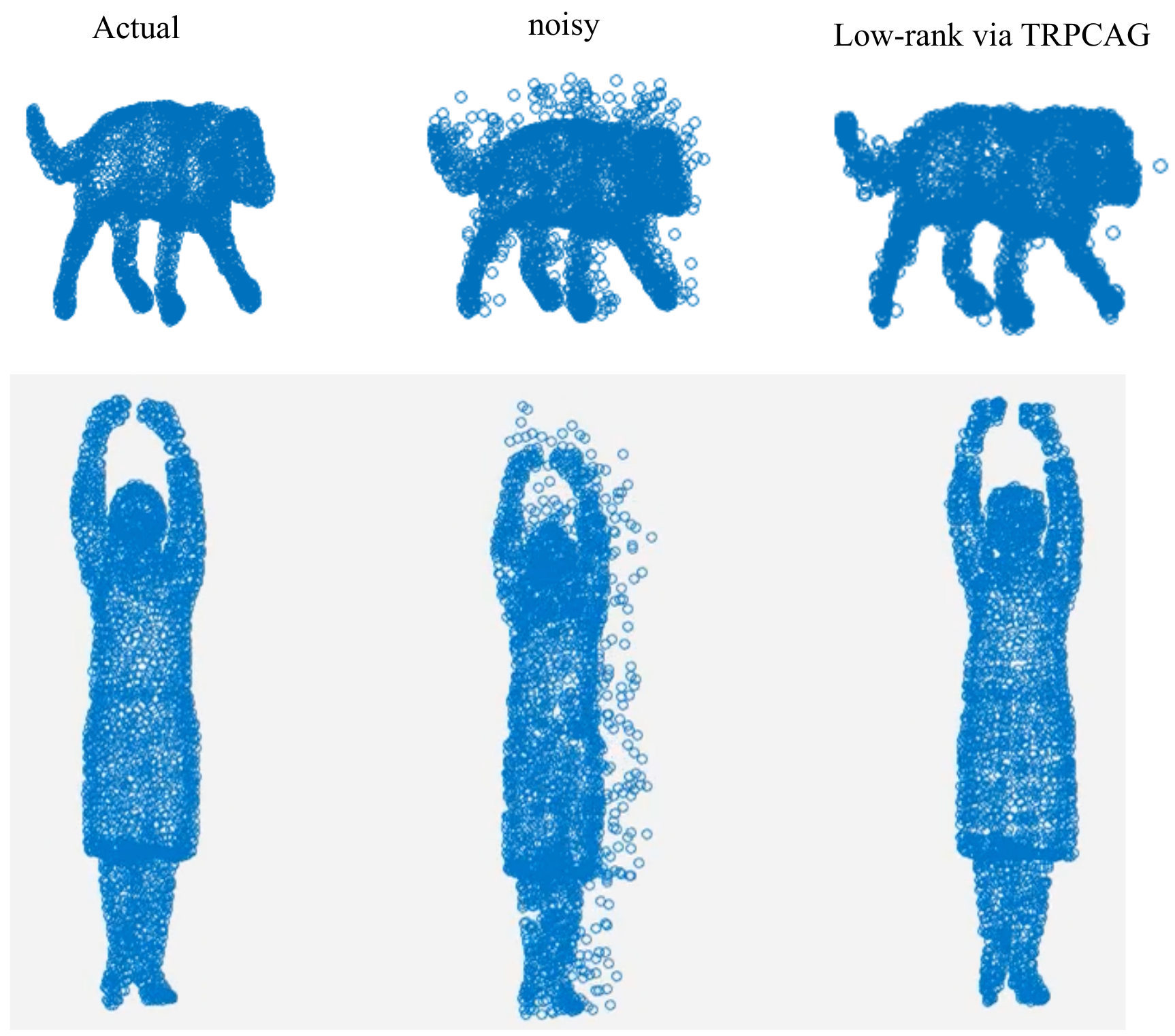}
         \caption{TRPCAG performance for recovering the low-rank point clouds of a walking dog and a dancing girl from the sparse noise. left) actual point cloud, middle) noisy point cloud, right) recovered via TRPCAG.}
        \label{fig:point_cloud}
    \end{figure*}
    
       \begin{figure*}[htbp]
    \centering
        \centering
        \includegraphics[width=0.8\textwidth]{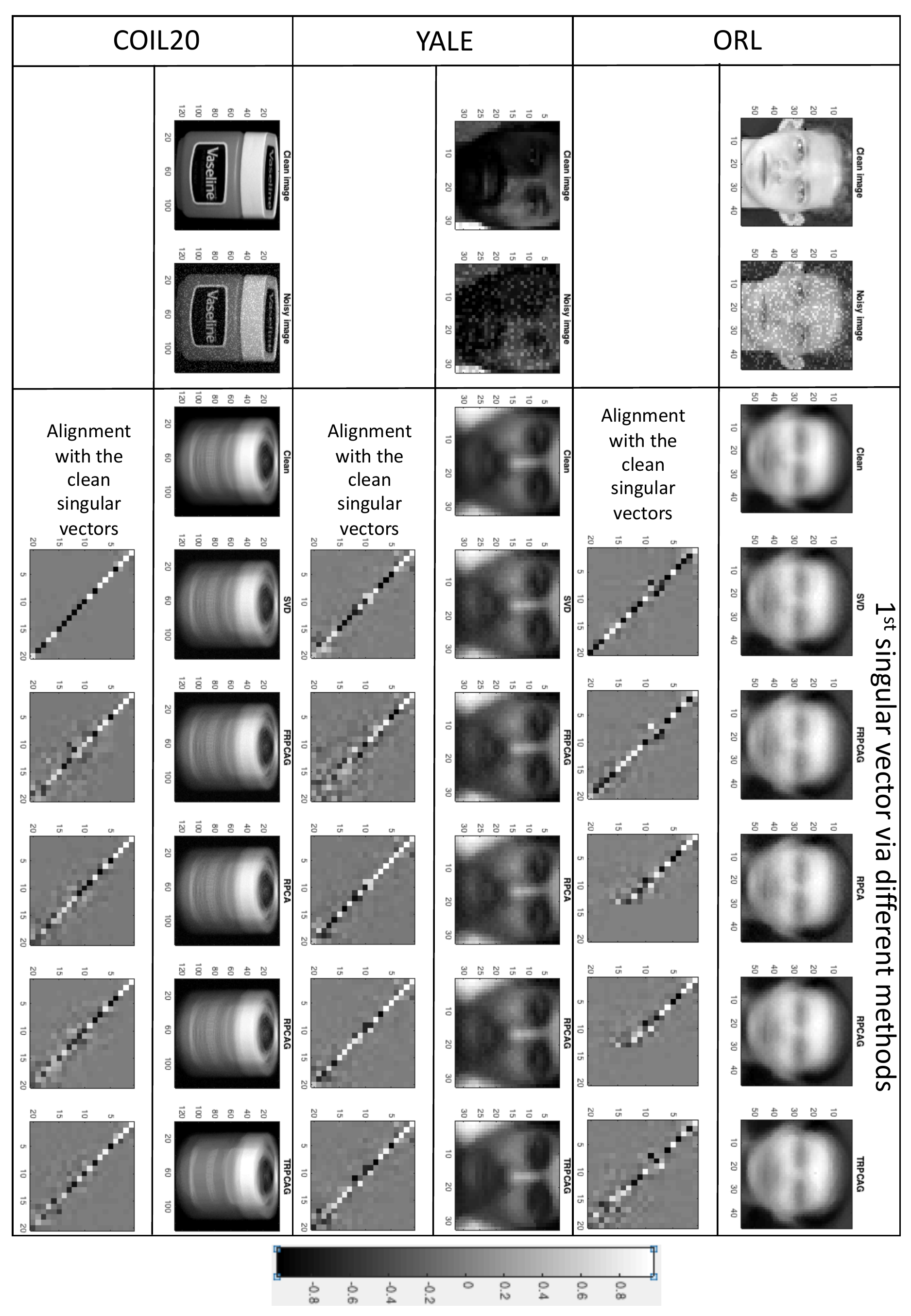}
         \caption{Robust recovery of subspace structures via TRPCAG. The leftmost plots show a clean and sparsely corrupted sample image from various datasets. For each of the datasets, other plots in the 1st row show the 1st singular vector of the clean data and those recovered by various low-rank recovery methods, SVD, FRPCAG, RPCA, RPCAG and TRPCAG and the 2nd row shows the alignment of the 1st 20 singular vecors recovered by these methods with those of the clean tensor. }
        \label{fig:singular_vectors}
    \end{figure*}
    
      \begin{sidewaysfigure}[htbp]
    \centering
        \centering
        \includegraphics[width=1.0\textwidth]{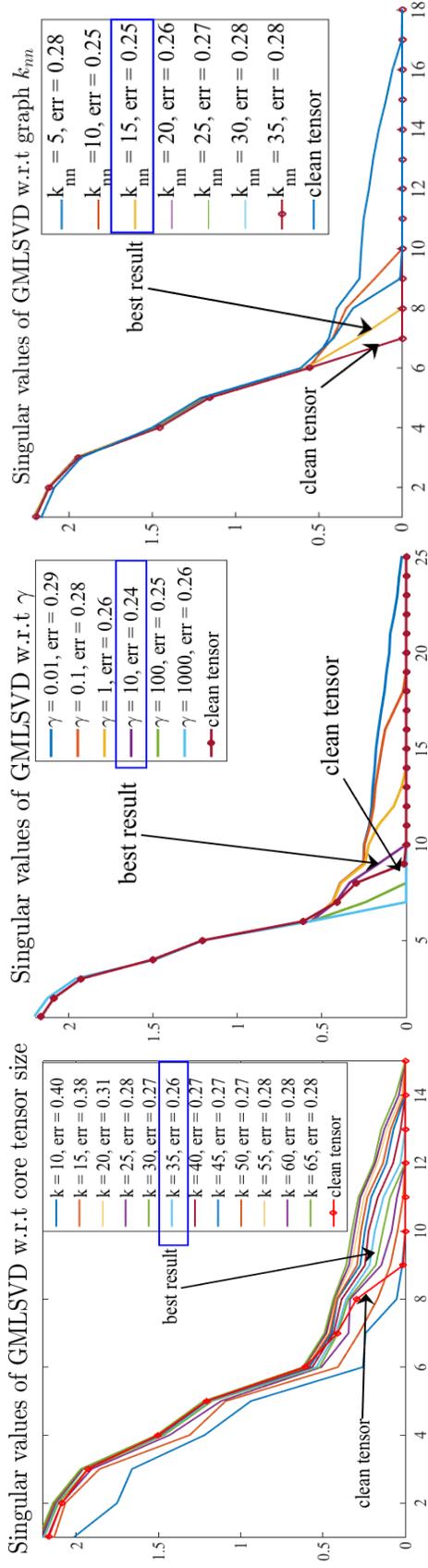}
        \caption{The GCTP (eq.\ref{eq:gctp}) has three parameters: 1) the regularization parameter $\gamma$, 2) the power $\alpha$ in $g(\Lambda_{\mu k}) = \Lambda_{\mu k}^\alpha$ and 3) the graph multilinear rank $k$. Fig.\ref{fig:params} (left plot) demonstrates the effect of varying the multilinear rank $k$ on the singular values and the reconstruction error. For this experiment, we fix $\gamma = 1$ and use a $100 \times 100$ matrix of rank $10$ and add Gaussian noise to attain a SNR of 5dB. The best error occurs at $k  = 35 (> 10)$ as the eigen gap condition does not hold for the graphs and the dataset is noisy. Note also that the error decreases when $k$ is increased from 10 to 35 and then starts increasing again. This observation motivates the use of a $k > k^*$ ($k^* = 10$ here) in the absence of an eigen gap, that supports our theoretical finding in Section \ref{sec:theory}. Next, we fix $k = 35$ and repeat the experiments for $\gamma$. The best result (middle plot) occurs at $\gamma = 10$. Finally, we fix $k = 35, \gamma = 10$ and run experiments for different values of $\alpha$. No change in the error and singular values was observed for different values of $\alpha$. This shows that if the rank $k$ and $\gamma$ are tuned, the optimization problem is not sensitive to $\alpha$. Although the graph $\K$ parameter is not a model parameter of GCTP, it still effects the quality of results to some extent, as shown in the rightmost plot of Fig.\ref{fig:params}. For this experiment we fix $\gamma = 10, k = 35, \alpha = 1$ and use the same dataset with 5dB SNR. The best result occurs for $\K = 15$ which is not surprising, as this value results in a sparse graph which keeps the most important connections only. Furthermore, note that the singular values of the recovered tensor for $\K = 15$ are closest to those of the clean tensor.}
        \label{fig:params}
    \end{sidewaysfigure}

\end{document}